\newcommand{\uh}{\ensuremath{\hat{u}}}
\newcommand{\vh}{\ensuremath{\hat{v}}}
\newcommand{\wh}{\ensuremath{\hat{w}}}
\newcommand{\us}{\ensuremath{u^\ast}}
\newcommand{\vs}{\ensuremath{v^\ast}}
\newcommand{\ws}{\ensuremath{w^\ast}}
\newcommand{\Uc}{\ensuremath{\mathcal{U}}}
\newcommand{\Vc}{\ensuremath{\mathcal{V}}}
\newcommand{\Wc}{\ensuremath{\mathcal{W}}}
\newcommand{\fu}{\ensuremath{f_{\mathrm{util}}}}
\newcommand{\fp}{\ensuremath{f_{\mathrm{priv}}}}
\begin{document}

\title{Minimax Filter: Learning to Preserve Privacy\\
from Inference Attacks}

\author{\name Jihun Hamm \email hammj@cse.ohio-state.edu \\
       \addr Department of Computer Science and Engineering\\
       The Ohio State University\\
       Columbus, OH 43210, USA
       }

\editor{}

\maketitle

\begin{abstract}
Preserving privacy of continuous and/or high-dimensional data such as 
images, videos and audios, can be challenging with 
syntactic anonymization methods which are designed for discrete attributes.
Differentially privacy, which uses a more rigorous definition of privacy loss, 
has shown more success in sanitizing continuous data. 
However, both syntactic and differential privacy are susceptible to 
inference attacks, i.e., an adversary can accurately infer sensitive
attributes from sanitized data. 
The paper proposes a novel filter-based mechanism which preserves privacy
of continuous and high-dimensional attributes against inference attacks.
Finding the optimal utility-privacy tradeoff is formulated as
a min-diff-max optimization problem. 
The paper provides an ERM-like analysis of the generalization error
and also a practical algorithm to perform minimax optimization. 
In addition, the paper proposes a noisy minimax filter which combines
minimax filter and differentially-private mechanism. Advantages
of the method over purely noisy mechanisms is explained and demonstrated with examples.
Experiments with several real-world tasks including facial expression classification,
speech emotion classification, and activity classification from motion, show that
the minimax filter can simultaneously achieve similar or higher target task accuracy 
and lower inference accuracy, often significantly lower than previous methods.
\end{abstract}

\begin{keywords}
inference attack, empirical risk minimization, minimax optimization, differential privacy, k-anonymity
\end{keywords}

\section{Introduction}

Privacy is an important issue when data collected from or related to individuals
are analyzed and released to a third party. 
In response to growing privacy concerns,
various methods for privacy-preserving data publishing have been proposed 
(see \citealt{Fung:2010:CSUR} for a review.) 
{\it Syntactic anonymization} methods, such as $k$-anonymity
 \citep{Sweeney:2002:IJUFKS} and $l$-diversity \citep{Mach:2007}
focus on anonymization of quasi-identifiers and protection of sensitive attributes
in static databases. 
However, it is known that syntactic anonymization is susceptible to 
several types of attacks such as the DeFinetti attack \citep{Kifer:2009}. 
An adversary may be able to accurately {\it infer} sensitive attributes 
of individuals from insensitive, sanitized attributes. 
High-dimensional data also poses a challenge for syntactic anonymization methods. 
For example, $k$-anonymity is known to be ineffective for high-dimensional 
sparse databases \citep{Narayanan:2008}.
In addition, syntactic anonymization methods are designed with discrete attributes in mind. 
However, continuous attributes such as videos, images, or audios cannot be discretized
by binning or clustering without loss of information. 
Besides, conventional categorization of attributes as 
identifiers, quasi-identifiers, or sensitive information becomes
ambiguous with multimedia-type data.
For example, an image can be an identifier if it contains the face of the data owner. 
However, even if the face is blurred, other attributes considered sensitive by the
owner such as gender or race can still be recognizable. 
Furthermore, identifying or sensitive information can be revealed through correlation
with other information such as the background or other people in the scene. 
 
{\it Differential privacy} \citep{Dwork:2004:CRYPTO,Dwork:2006:TC,Dwork:2006:ALP}
was proposed to address many weaknesses of syntactic methods (see
the discussion by \citealt{Clifton:2013}.)
Differential privacy has a more formal privacy guarantee than that of
syntactic methods, and is applicable to many problems beyond database release
\citep{Dwork:2014}.
In particular, differential privacy can be defined for
continuous and/or high-dimensional attributes as well as for functions  \citep{Hall:2013:JMLR}.
However, similarly to syntactic anonymization, differential privacy is not immune 
to inference attacks \citep{Cormode:2011}, as differentially privacy
only prevents an adversary from gaining {\it additional} knowledge by
inclusion/exclusion of a subject \citep{Dwork:2014},
and not from gaining knowledge from released data itself.
Therefore, an adversary may still guess sensitive attributes 
of subjects from differentially-private attributes with confidence.

To preserve privacy of continuous high dimensional data from inference attacks,
this paper proposes an {approach} which differs significantly
from previous syntactic or differentially-private approaches. 
Consider a scenario where a social media user wants to obfuscate all faces 
in her picture with minimal distortion before posting the picture online. 
The obfuscation mechanism proposed in the paper is a type of {\it filtering} of 
the original features by a non-invertible transformation. 
How to choose an optimal filter is explained in the following general description.
Once the filtered data (e.g., obfuscated pictures) are released, 
an adversary will try to infer sensitive or identifying attributes from the data
in particular using machine learning predictors. 
Therefore the data owner needs a filter that can minimize the maximum accuracy that any
adversary may achieve in predicting the sensitive or identifying attributes.
This is an instance of {\it minimax games} between the data owner and the adversary. 
The privacy of filtered data is measured by 
the {\it expected risk} of adversarial algorithms on specific inference tasks such as
identification.
However, if privacy is the only goal, near-perfect privacy is achievable with 
a simple mechanism that sends no or garbage data, which has no utility for 
any party.
To avoid those trivial solutions, (dis)utility of filtered data needs to be
considered as the second goal. Disutility can be measured by the amount of 
distortion of the original data after filtering. Alternatively, 
if there are particular tasks of interest to be performed on the data by 
non-adversarial analysts, then again the expected risk of the target tasks 
can be used as disutility. 
The two goals---achieving privacy and utility---are often mutually conflicting,
and finding an optimal tradeoff between the two is a central question
in privacy research (see Related work.)
This paper proposes to minimize the difference of two risks by the minimax optimization
of (\ref{eq:joint goal 1}). The solution to the optimization problem will be referred to as 
{\bf minimax filter}. 
In the literature, several methods have been used to solve 
continuous minimax optimization problems, 
including the method by \cite{Kiwiel:1987} used in \cite{Hamm:2015a}. 
The paper uses a simpler optimization method based on the classic theorem
of \cite{Danskin:1967}.

A notable assumption this paper makes is that the training data for computing 
an optimal filter are independent of the test data. 
For example, there are publicly available data sets
that can be used to compute minimax filters such as those from the UCI data repository.\footnote{\url{http://archive.ics.uci.edu/ml/}} 
As the training data set is already public information, the paper considers only the 
privacy of the subjects who use the filter at test time. A similar assumption was made 
in \cite{Hamm:2016} for knowledge transfer purposes. 
After the filter is learned from training data, a new test subject can use the 
filter to obfuscate her data by herself without requiring a third party to collect 
and process her raw data.
Note that this setting is analogous to the setting of local differential privacy \citep{Duchi:2013}
where the entity that collects data is not trusted.
Since the training procedure can only access empirical risks, the performance
of the filter on test data is given in the form of expectation/probability.
The paper presents an analysis of generalization error for empirical minimax optimizers
in analogy with the analysis of empirical risk minimizers (ERM).

The goal of minimax filter is to prevent inference attacks, and its
privacy guarantee is quite different from those of other privacy mechanisms.
It is task-dependent and is given in probability or expectation rather than 
given absolutely, which may be considered weaker than other privacy criteria such as
differential privacy.
Furthermore, the sanitized data whose sensitive information is filtered out
may become unsafe in the future if people's perception of which attribute is sensitive
changes over time.
Since the goal of minimax filter and the goal of differential privacy are very different, 
it is natural to ask if the two methods can be combined to take advantages
of both methods.
Consequently, this paper presents an extension of minimax filter called 
{\bf noisy minimax filter},
which combines the filter with additive noise mechanism to satisfy
the differential privacy criterion. 
Two methods of combination---preprocessing and postprocessing---are 
proposed (see Fig.~\ref{fig:preprocessing}.)
In the preprocessing approach, minimax filter is applied {before} perturbation
to reduce the sensitivity of transformed data, so that the same level
of differential privacy is achieved with less noise. 
Similar ideas have been utilized before, where data are transformed by 
Discrete Fourier Transform \citep{rastogi2010differentially} and by Wavelet Transform \citep{xiao2011differential} before noise is added.
In the postprocessing approach, minimax filter is applied {after} perturbation,
and its performance is compared with the preprocessing approach.

Minimax filter and its extensions are evaluated with several real-world tasks 
including facial expression classification,
speech emotion classification, and activity classification from motion.
Experiments show that publicly available continuous and high-dimensional 
data sets are surprisingly susceptible to subject identification attacks,
and that minimax filters can reduce the privacy risks to near chance levels
without sacrificing utility much.
Experiments with noisy minimax filter also yield intuitive results.
Differential privacy and resilience to inference attack are indeed different
goals, such that using differentially private mechanism alone to achieve
the latter requires a large amount of noise that destroys utility of data. 
In contrast, minimax filters can suppress inference attack with little
loss of utility with or without perturbation. 
Therefore, adding a small amount of noise to the minimax filter
can provide a formal differential privacy to a degree and also high 
on-average task-dependent utility and privacy against inference attacks.

To summarize, the paper has the following contributions.
\begin{itemize} \setlength{\itemsep}{-2pt}

\item The paper proposes a novel filtering approach which preserves privacy
of continuous and high-dimensional attributes against inference attacks.
This mechanism is different from previous mechanisms in many ways; 
in particular, it is a learning-based approach and is task-dependent. 
\item The paper measures utility and privacy by expected risks, and 
formulates the utility-privacy tradeoff as a min-diff-max optimization problem.
The paper provides an ERM-like analysis of the generalization performance of
empirical optimizers.
\item The paper presents a practical algorithm which can find minimax filters for
a broad family of filters and losses/classifiers. 
The proposed optimization algorithm and supporting classes can be found on the 
open-source repository.\footnote{\url{https://github.com/jihunhamm/MinimaxFilter}}
\item The paper proposes preprocessing and postprocessing approaches 
to combine minimax filter with noisy mechanisms. The resulting combination
can achieve resilience to inference attacks and differential privacy at the same time.
\item The paper evaluates proposed algorithms on real-world tasks
and compares them with representative algorithms from the literature.
\end{itemize}

The rest of the paper is organized as follows.
Sec.~\ref{sec:related work} presents related work in the literature. 
Sec.~\ref{sec:minimax filter} presents minimax filters and
analyzes its generalization performance on test data.
Sec.~\ref{sec:optimization} explains the difficulty of solving general minimax
problems, and present a simple alternating optimization algorithm.
Sec.~\ref{sec:noisy minimax filter} presents noisy minimax filters and two
types of perturbation by additive noise.
Sec.~\ref{sec:experiments} evaluates minimax filters with three data sets 
compared to non-minimax approaches and also evaluates noisy minimax filters under
various conditions. 
Sec.~\ref{sec:conclusion} concludes the paper with discussions.

\section{Related work}\label{sec:related work}

Optimal utility-privacy tradeoff is one of the main goals in privacy research.
Utility-privacy tradeoff has particularly been well-studied 
under differential privacy assumptions
\citep{Dwork:2004:CRYPTO,Dwork:2006:TC,Dwork:2006:ALP},
in the context of the statistical estimation 
\citep{Smith:2011, Alvim:2012, Duchi:2013} and learnability \citep{Kasiviswanathan:2011}. 
Other measures of privacy and utility were also proposed.
Information-theoretic quantities were proposed by
\cite{Sankar:2010,Rebollo:2010,Calmon:2012} who analyzed privacy in terms of
the rate-distortion theory in communication.
One problem with using mutual information or related quantity is
that it is difficult to estimate mutual information of high-dimensional and
continuous variables in practice without assuming a simple distribution.
In contrast, this paper proposes to use classification or regression risks 
to measure privacy and utility,
which is directly computable from data without making assumptions on the distribution.
Regarding the use of risks in this paper, 
classification error-based quantities have been suggested in the literature \citep{Iyengar:2002,Brickell:2008,Li:2009}.
However, privacy in those works is measured either by syntactic anonymity
or probabilistic divergence which are mainly suitable for discrete attributes. 
In this paper, privacy and utility are both defined using risks and are therefore
directly comparable when defining the tradeoff of the two.
Furthermore, the proposed method explicitly preserves privacy against
inference attacks, which both syntactic and differentially-private methods
are known to be susceptible to \citep{Cormode:2011}.

Most of the aforementioned works focused on the analyses of utility-privacy tradeoff 
using different measures and assumptions. Few studied efficient algorithms to
actively find optimal tradeoff which this paper aims to do.
For discrete variables, \cite{Krause:2008} showed the NP-hardness of 
optimal utility-privacy tradeoff in discrete attribute selection,
and demonstrated near-optimality of greedy selection. In particular,
they used a weighted difference of utility and privacy cost as the joint cost
similar to this work. 
\cite{Ghosh:2009} proposed geometric mechanism and linear programming
to achieve near-optimal utility for unknown users.
Note that the optimization problems with discrete distributions are quite different
from the problems involving high-dimensional and/or continuous distributions.

Algorithms for preserving privacy of high-dimensional face images
has been proposed previously. 
\cite{Newton:2005} applied k-anonymity to images;
\cite{Enev:2012} proposed to learn a linear filter using Partial Least Squares
to reduce the covariance between filtered data and private labels;
\cite{Whitehill:2012} also proposed a linear filter using
the log-ratio of the Fisher's Linear Discriminant Analysis metrics.
\cite{xu2017cleaning} presented a related method of preserving privacy of 
linear predictors using the Augmented Fractional Knapsack algorithm.
This paper differs from these in several aspects:
it is not limited to linear filters and is applicable to 
arbitrary differentiable nonlinear filters such as multilayer neural networks;
it directly optimizes the utility-privacy risk instead of optimizing heuristic
criteria such as covariance differences or LDA log-ratios. 

The noisy minimax filter proposed in Sec.~\ref{sec:noisy minimax filter}
bears a resemblance to the work of \cite{rastogi2010differentially} and \cite{xiao2011differential}. 
\cite{rastogi2010differentially} presented a differentially private method 
of answering queries on time-series data. They used Discrete Fourier
Transform to reduce the data dimension and homomorphic encryption to perform
distributed noise addition which outperformed the naive noise addition method.
\cite{xiao2011differential} presented a differentially private range-counting 
method where they used wavelets to transform the data before adding noise. 
Effectiveness of the method was analyzed and also demonstrated empirically.
The noisy minimax filter presented in this paper, especially the preprocessing approach,
is similar in concept to those works in that the combination of data transformation 
and perturbation is used to enhance utility. 
However, the transform in this paper (i.e., the minimax filter)
is learned from data for specific tasks unlike the Fourier or the Wavelet transform
which are data and task independent. 

Lastly, the alternating optimization algorithm (Alg.~\ref{alg:alternating})
presented in this paper is related to the algorithm proposed by \cite{Goodfellow:2014}. 
The algorithm proposed in this paper solves a min-diff-max problem to find 
an optimal utility-privacy tradeoff, while \cite{Goodfellow:2014} solve
a minimax problem to learn generative models.

Parts of this paper have appeared in conference proceedings
 \citep{Hamm:2015a,Hamm:2017a}. New materials in this paper include 
reformulations of concepts and terms, ERM-like analysis of generalization error, 
new closed-form examples for minimax optimization, and
an alternating optimization algorithm to solve minimax problems. 

\section{Minimax Filter}\label{sec:minimax filter}

In this section, minimax filter is introduced and discussed in detail, 
and its generalization error is analyzed. 

\subsection{Formulation}
Minimax filter is a non-invertible transformation of raw features/attributes 
such that the transformed data have optimal utility-privacy tradeoff.
Non-invertibility is assumed so that original features are not always
recoverable from the filtered data.
Let's assume the filter is deterministic; randomize filters will be discussed in
Sec.~\ref{sec:noisy minimax filter}. 
Let $\mathcal{X} \subset \mathbb{R}^D$ be the space of features/attributes
as real-valued vectors. Note that discrete attributes can
also be represented by real vectors, e.g., by one-hot vector. 
Let the filter be a map
\begin{equation}
g(x;u)\in G: \mathcal{X}\times \Uc \to \mathbb{R}^d
\end{equation}
which is continuous in $x$ and is continuously differentiable w.r.t. the 
parameter $u\in \Uc$.
Given a filtered output $g(x)$, an adversary can make a prediction
$h_p(g(x);v)$ of a private variable $y$ which can be an identifying
or sensitive attribute. 
The prediction function $h_p(g(x);v)$ parameterized by $v\in \Vc$ 
is also assumed to be continuous in $v$ and continuously differentiable w.r.t. to the input  $g(x)$.
The paper proposes to use expected risk to measure the privacy of filtered output
against adversarial inference:
\begin{equation}\label{eq:privacy risk}
\fp(u,v) \triangleq E[l_p(h_p(g(x;u);v), y)],
\end{equation}
where $l_p(\cdot)$ is a continuously differentiable loss function.
From the assumptions above, $\fp$ is continuously differentiable
w.r.t. the filter parameter $u$. 

Trivial solutions to maximize privacy already exist, which are the
filters that output random or constant numbers independent of actual data. 
However, such filters have no utility whatsoever for any party. 
To avoid such trivial solutions, it is necessary to consider the secondary goal of 
maximizing utility. 

Suppose the disutility of filtered data is measured by the distortion of 
the original data.
If $g(x;u)$ is the filter/encoder $\mathcal{X}\to \mathbb{R}^d$, 
then one can construct the decoder $h(\cdot\;;w):\mathbb{R}^d \to \mathcal{X}$,
such that following reconstruction error
\begin{equation}\label{eq:reconstruction error}
\fu(u,w) \triangleq E[ \|h_u(g(x;u);w) - x\|^2 ],
\end{equation}
is minimized (i.e., $\min_w \fu(u,w)$.)

For another example of utility, 
let $z$ be a target variable that is of interest to the subjects and analysts
such as medical diagnosis of users' data. 
An analyst can make a prediction $h_u(g(x);w)$ parameterized by $w \in \Wc$, 
which is assumed to be continuous in $w$ and continuously differentiable w.r.t.
the input $g(x)$.
The (dis)utility of the filtered output for a non-adversarial analyst can
also be measured by the expected risk
\begin{equation}\label{eq:utility risk}
\fu(u,w) \triangleq E[l_u(h_u(g(x;u);w), z)],
\end{equation}
where $l_u(\cdot)$ is a continuously differentiable loss function,
such that $\fu$ is continuously differentiable w.r.t. the filter
parameter $u$. 
To facilitate the analysis, the paper assumes that the constraint sets $\Wc$,
$\Vc$, and $\Uc$ are compact and convex subsets of Euclidean spaces such as
a ball with a large but finite radius.
Along with the assumption that the filter $g$ and the risks $\fp$ and $\fu$ 
are all continuous, min and max values are bounded and attainable. 
In addition, the solutions to min or max problems are assumed to be 
in the interior of $\Wc$, $\Vc$, and $\Uc$, enforced by adding appropriate
regularization (e.g, $\lambda \|w\|^2$) to the optimization problems
if necessary. 
For this reason, min or max problems that appear in the paper will be treated as
unconstrained and the notations $u\in\Uc$, $v\in\Vc$, and $w\in\Wc$
will be omitted.

Having defined the privacy measure and the utility measure, 
the goal of a filter designer is to find a filter that achieves the following two objectives.
The first objective is to {\it maximize privacy}
\begin{equation}\label{eq:privacy goal}
\max_u \min_v \fp(u,v) \;\;\;(\mathrm{or}\;\mathrm{equivalently,}\;\; \min_u \max_v -\fp(u,v))
\end{equation}
where $\min_v \fp(u,v)$ represents the risk of the worst (i.e., most capable)
adversary: the smaller the risk, the more accurately can she infer
the sensitive variable $y$.
As mentioned before, this problem alone has a trivial solution such as a 
constant filter that outputs zeros. 
The second objective is to {\it minimize disutility}
\begin{equation}\label{eq:utility goal}
\min_u \min_w \fu(u,w)\;\;\;(\mathrm{or}\;\mathrm{equivalently,}\;\;  \min_u -\max_w -\fu(u,w))
\end{equation}
where $\min_w \fu(u,w)$ represents the risk of the best analyst:
the smaller the risk, the more accurately can the analyst reconstruct 
original data $x$ or predict the variable of interest $z$. 
To achieve the two opposing goals, we can solve the joint problem of 
minimizing the weighted sum 
\begin{equation}\label{eq:joint goal 1}
\min_u \;\left[ \max_{v} -\fp(u,v) + \rho\;\min_{w} \fu(u,w)\right],
\end{equation}
or equivalently the weighted difference of max values
\begin{equation}\label{eq:joint goal 2}
\min_u \;\left[\max_v -\fp(u,v)-\rho\max_w -\fu(u,w)\right].
\end{equation}
The constant $\rho>0$ determines the relative importance of utility versus privacy.
For a small $\rho \ll 1$, the problem is close to a trivial privacy-only task,
and for a large $\rho \gg 1$, the problem is close to a utility-only task.
The solution to (\ref{eq:joint goal 1}) or (\ref{eq:joint goal 2})
will be referred to as {\bf minimax filter}\footnote{To be precise, the joint task (\ref{eq:joint goal 1}) is
a min-diff-max problem and the privacy-only task (\ref{eq:privacy goal})
is a minimax problem. However, both will be referred to as minimax
as (\ref{eq:joint goal 1}) can be rewritten as a standard minimax problem.}
and is by definition an optimal 
filter for utility-privacy tradeoff in terms of expected risks 
given the family of filters $\{g(\cdot\;;u)|u\in\Uc\}$, the family of 
private losses/classifiers $\{l_p(h_p(\cdot\;;v))|v\in\Vc\}$ and the family of utility
losses/classifiers $\{l_u(h_u(\cdot\;;w))|w\in\Wc\}$. 
Note that the choice of filter and loss/classifier families is very flexible,
with the assumption of differentiability only. In practice, almost-everywhere
differentiability suffices to use the algorithm in the paper. 
Fig.~\ref{fig:minimax filter} shows an example filter/classifier from the class
of multilayer neural networks.
As an aside, the joint problem may be formulated as minimization of
disutility with a hard constraint on privacy risk. 
When using interior-point methods, the procedure is similar to solving
(\ref{eq:joint goal 2}) iteratively with an increasing $\rho$, 
which is more demanding than minimizing the weighted sum only once as the paper 
proposes.

\begin{figure}[tb]
\centering
\includegraphics[width=0.6\linewidth]{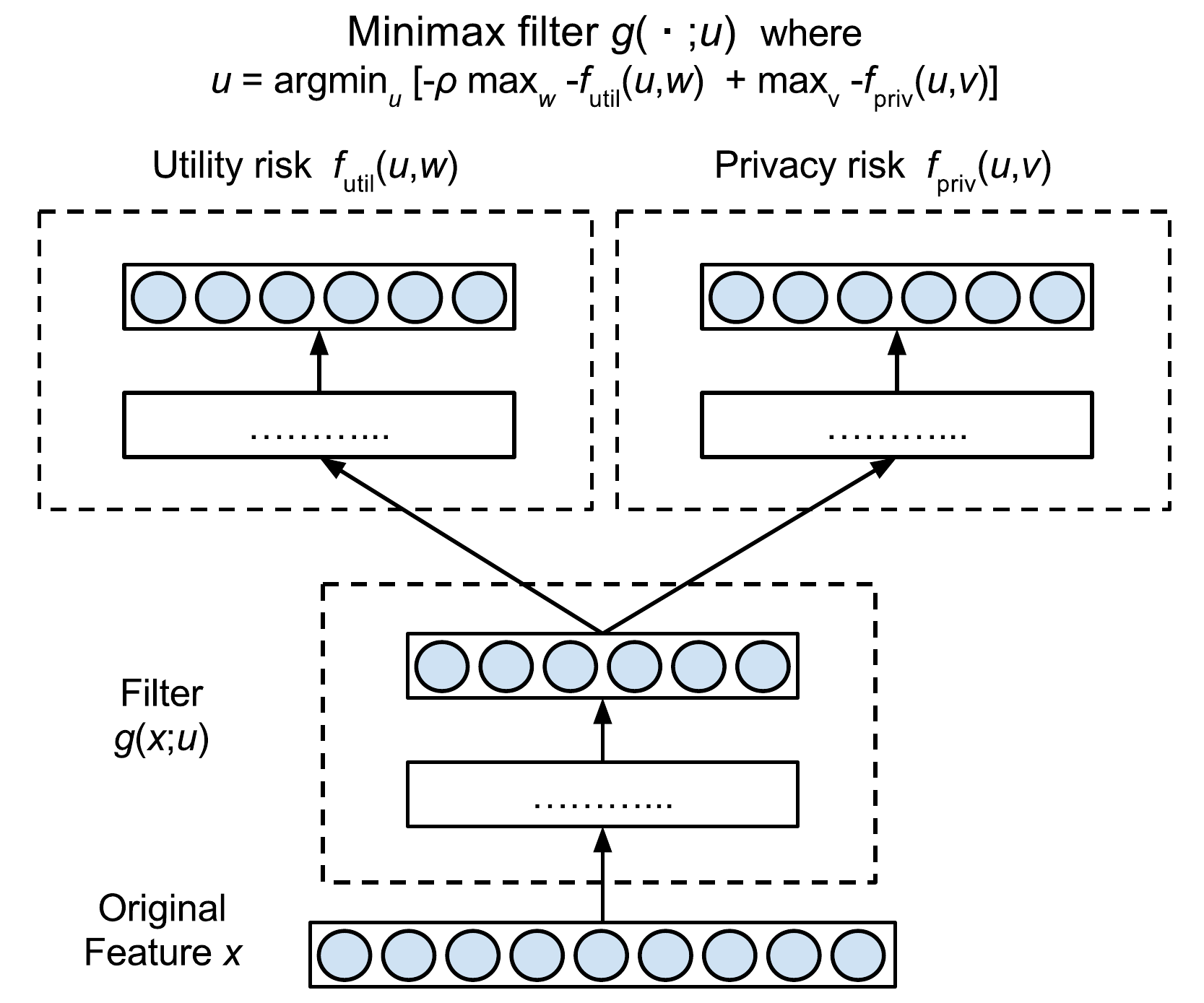}
\caption{Minimax filter with a filter/classifiers from the class
of multilayer neural networks.
}
\label{fig:minimax filter}
\end{figure}

\subsection{Notes on private and utility tasks}

The private variable $y$ can be any attribute which is considered sensitive
or identifying. 
For example, let $y$ be any number or string unique to a person in the
data set. Such identifiers are bijective with $\{1, ... ,S\}$, where
$S$ is the total number of subjects, so assume $y\in \{1,...,S\}$. 
The private task for an adversary is then to predict the subject number $y$
from the filtered data $g(x)$, whose inaccuracy is measured by the expected risk of
the identification task. 
That is, the less accurate the private classifier is, the more anonymous 
the filtered output is. 
The identity variable can also be group identifiers, e.g., $y$ is a
demographic grouping based on age, sex, ethnicity, etc.
Another example of private tasks is to single-out a particular subject among the rest,
in which case $y$ is binary: $y =-1$ means `not the target subject' and $y=1$ means `target subject'.
To summarize, anonymity of filtered data in this paper means 
resilience to inference attacks on any variable $y$ that we choose. 
This unifying approach is convenient since we need not determine
whether an attribute is an identifier,
a quasi-identifier or a sensitive attribute as in syntactic anonymization. 
Any information hidden in the continuous high-dimensional features
which are relevant to the private variable $y$---whatever it may be---will be
maximally filtered out by construction.
Similarly, the target variable $z$ of interest can be any variable that is 
not the same as the private variable $y$.
In the pathological case where they are the same ($z=y$), 
the objective (\ref{eq:joint goal 2}) becomes 
\begin{equation}
\min_u\; \left[(1-\rho) \max_v -f(u,v)\right]
\end{equation}
 which is either a trivial privacy-only problem when $0\leq \rho < 1$, or
a utility-only problem when $\rho > 1$.
In general, $z$ and $y$ will be correlated to a certain degree, and
the minimax filter will find the best compromise of utility and privacy risks.

Also, private and target tasks need not be classification tasks only.
Regression tasks can also be used as a target task,
as well as unsupervised tasks that do not require label $z$.
Unsupervised tasks are useful when the target task is unknown or non-specific.
For example, (\ref{eq:reconstruction error}) measures the expected least-squares error
between the original and the reconstructed features. 

\subsection{Multiple tasks}
Extension of the idea in the previous section to multiple private and target tasks is straightforward.
Suppose there are $N_p$ private tasks $\fp^1(u,v_1),...,\fp^{N_p}(u,v_{Np})$
associated with private random variables $y^1,...,y^{Np}$.
Note that $\fp^i(u,v_i) = E[l_p^i(h^i(g(x;u);v_i),y^i)]$.
Similarly, suppose there are 
$N_u$ target tasks $\fu^1(u,w_1),...,\fu^{N_u}(u,w_{Nu})$
associated with target random variables $z_1,...,z_{Nu}$.
If $\kappa_1,...,\kappa_{Np}$ are the coefficients representing
relative importance of private tasks, and $\rho_1,...,\rho_{Nu}$ are 
the coefficients for utility tasks, then the final objective is to solve the following problem
\begin{equation}\label{eq:joint goal 3}
\min_u \;\left[ \sum_{i=1}^{N_p} \kappa_i \max_{v_i} -\fp^i(u,v_i) 
\;+\; \sum_{i=1}^{N_u} \rho_i \min_{w_i} \fu^i(u,w_i)\;\right].
\end{equation}
Since this can be rewritten as
\begin{equation}
\min_u \;\left[ \max_{v_1,...,v_{N_p}} -\sum_{i=1}^{N_p} \kappa_i\fp^i(u,v_i) 
\;+\; \min_{w_1,...,w_{N_u}} \sum_{i=1}^{N_u} \rho_i \fu^i(u,w_i)\;\right],
\end{equation}
this multiple task problem is nearly identical to the original single task problem (\ref{eq:joint goal 1}),
with the new utility and privacy tasks defined as
\begin{eqnarray}
\hat{f}_{\mathrm{priv}}(u, v\mathrm{=}(v_1,...,v_{Np}))&\triangleq&\sum_i \kappa_i \fp^i(u,v_i)
,\;\;\mathrm{and}\\
\hat{f}_{\mathrm{util}}(u, w\mathrm{=}(w_1,...,w_{Nu}))&\triangleq&\sum_i \rho_i \fu^i(u,w_i),
\end{eqnarray}
with $\rho=1$. 
Using this, it is straightforward to extend the analysis 
and algorithms developed for single tasks to those for multiple tasks.

\subsection{Generalization performance of minimax filter}\label{sec:analysis}

The proposed privacy mechanism is a learning-based approach. 
An optimal filter is one that solves the expected risk optimization 
(\ref{eq:joint goal 1}). 
However, in reality, an optimal filter has to be estimated from finite 
training samples, and we need a guarantee on the performance
of the learned filter on unseen test samples.
This section derives generalization bounds for empirical minimax filter
similar to the derivation of bounds for empirical risk minimizers (ERM).

The joint problem for expected risks was
\begin{equation}
\min_u \left[\max_v -\fp(u,v) + \rho \min_w \fu(u,w)\right]\\
= \min_u \left[\max_v -E[l_p(u,v)] + \rho \min_w 
E[l_u(u,w)]\right].
\end{equation}
A joint loss $l_J$ is introduced for convenience:
\begin{equation}\label{eq:joint loss}
l_J(u,v,w) \triangleq -l_p(u,v) + \rho\; l_u(u,w).
\end{equation}
Let $(\us,\vs,\ws)$ be a solution to the expected risk optimization problem: 
\begin{equation}
E_D[l_J(\us,\vs,\ws)] = 
\min_u \left[\max_v E_D[-l_p(u,v)] + \rho \min_w E_D[l_u(u,w)] \right],
\end{equation}
where $E_D[\cdot]$ is the expected value w.r.t. the unknown data distribution 
$P(x,y)$.
Similarly, let $(\uh,\vh,\wh)$ be a solution to the empirical risk minimax problem: 
\begin{equation}
E_S[l_J(\uh,\vh,\wh)] = 
\min_u \left[\max_v E_S[-l_p(u,v)] + \rho \min_w E_S[l_u(u,w)] \right],
\end{equation}
where the empirical mean $E_S[\cdot]$ for $S=\{(x_1,y_1),\cdots,(x_N,y_N)\}$ is
\begin{equation}
E_S[l(x,y)] \triangleq \frac{1}{N}\sum_{(x,y)\in S} l(x,y).
\end{equation}
The goal in this analysis is to show that the expected and the empirical optimizers
perform equally well in expectation/probability given enough training samples:
\begin{equation}
E_D[l_J(\uh,\vh,\wh)] \simeq E_D[l_J(\us,\vs,\ws)],\;\;\mathrm{as}\;\;N\to\infty.
\end{equation}
The main result is the Theorem~\ref{thm:generalization bounds}
which is proved in the remainder of this section. 
Let's define optimal parameters $v(u)$ and $w(u)$ given $u$ as
\begin{eqnarray}
\vs(u) &\triangleq& \arg\max_v E_D[-l_p(u,v)],\;\;\;\;\;\vh(u)\;\;\triangleq\;\;\arg\max_v E_S[-l_p(u,v)], \\
\ws(u) &\triangleq& \arg\min_w E_D[l_u(u,w)],\;\;\;\;\;\;\;\wh(u)\;\;\triangleq\;\;\arg\min_w E_S[l_u(u,w)].
\end{eqnarray}
One can then write
\begin{eqnarray}
E_D[l_J(\us,\vs,\ws)] &=& \min_u \left[ E_D[-l_p(u,\vs(u))] +\rho E_D[l_u(u,\ws(u))\right]]\\
&=&\min_u E_D[l_J(u,\vs(u),\ws(u))],
\end{eqnarray}
and similarly
\begin{equation}
E_S[l_J(\uh,\vh,\wh)] = \min_u \left[ E_S[-l_p(u,\vh(u))] +\rho E_S[l_u(u,\wh(u))]\right]=\min_u E_S[l_J(u,\vh(u),\wh(u))].
\end{equation}
From these definitions we have for all $u$, 
\begin{eqnarray}
E_D[l_J(\us,\vs,\ws)] & \leq & E_D[l_J(u,\vs(u),\ws(u))], \label{eq:ineq1}\\
E_S[l_J(\uh,\vh,\wh)] & \leq & E_S[l_J(u,\vh(u),\wh(u))].\label{eq:ineq2}
\end{eqnarray}
Also from definition, for all $(u,v,w)$,
\begin{eqnarray}
E_D[l_J(u,v,\ws(u))] &\leq& E_D[l_J(u,v,w)] \;\leq\; E_D[l_J(u,\vs(u),w)],\label{eq:ineq3}\\
E_S[l_J(u,v,\wh(u))] &\leq& E_S[l_J(u,v,w)] \;\leq\; E_S[l_J(u,\vh(u),w)].\label{eq:ineq4}
\end{eqnarray}
These observations imply the following theorem.
\begin{theorem}\label{thm:representativeness}
The risk difference of expected and empirical optimizers is at most
twice the largest difference of expected and empirical risks of 
any set of parameters: 
\begin{equation}\label{eq:bound}
| E_D[l_J(\uh,\vh,\wh)]  - E_D[l_J(\us,\vs,\ws)]| \leq 
2 \sup_{u,v,w} \left|E_D[l_J(u,v,w)] - E_S[l_J(u,v,w)]\right|.
\end{equation}
\end{theorem}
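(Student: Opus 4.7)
The plan is to mimic the classical ERM excess-risk argument while handling the inner max and inner min. Define
\[
F_D(u)\triangleq\max_v\min_w E_D[l_J(u,v,w)]=E_D[l_J(u,\vs(u),\ws(u))],\quad F_S(u)\triangleq E_S[l_J(u,\vh(u),\wh(u))],
\]
where the identity in $F_D$ uses the separability of $l_J$ in $v,w$. By construction $\us=\arg\min_u F_D(u)$, $\uh=\arg\min_u F_S(u)$, and $E_D[l_J(\us,\vs,\ws)]=F_D(\us)$. Setting $\epsilon\triangleq\sup_{u,v,w}|E_D[l_J(u,v,w)]-E_S[l_J(u,v,w)]|$, the task reduces to showing that $E_D[l_J(\uh,\vh,\wh)]$ lies in $[F_D(\us)-2\epsilon,\,F_D(\us)+2\epsilon]$.

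First I would establish the uniform deviation $\sup_u|F_D(u)-F_S(u)|\leq\epsilon$. A naive split into a max-over-$v$ piece and a min-over-$w$ piece only gives $\sup_v|(E_D-E_S)[l_p]|+\rho\sup_w|(E_D-E_S)[l_u]|$, which can strictly exceed $\epsilon$. I would instead exploit the saddle structure: in the $l_p$ piece swap $\vh(u)$ for $\vs(u)$ inside $E_S$ (which only lowers $E_S$ since $\vh(u)$ maximizes $E_S[-l_p]$) and symmetrically swap $\ws(u)$ for $\wh(u)$ inside $E_D$ in the $l_u$ piece. After these two one-sided inequalities the difference collapses to the single quantity $(E_D-E_S)[l_J(u,\vs(u),\wh(u))]\leq\epsilon$; the mirror replacement bounds $F_S(u)-F_D(u)$ via $(E_D-E_S)[l_J(u,\vh(u),\ws(u))]$.

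The main obstacle is that $(\vh,\wh)$ are optimal only in $E_S$, so $E_D[l_J(\uh,\vh,\wh)]\neq F_D(\uh)$ in general, and the biases contributed by the $l_p$ and $l_u$ parts have opposite signs so no one-sided dominance argument helps. I would bridge $E_D[l_J(\uh,\vh,\wh)]$ and $F_S(\uh)$ by two chains built from (\ref{eq:ineq3})--(\ref{eq:ineq4}). The right half of (\ref{eq:ineq3}) gives $E_D[l_J(\uh,\vh,\wh)]\leq E_D[l_J(\uh,\vs(\uh),\wh)]$; adding and subtracting $E_S$ bounds this by $E_S[l_J(\uh,\vs(\uh),\wh)]+\epsilon$, which by the right half of (\ref{eq:ineq4}) is at most $E_S[l_J(\uh,\vh,\wh)]+\epsilon=F_S(\uh)+\epsilon$. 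The symmetric chain through $\ws(\uh)$, using the left halves of (\ref{eq:ineq3}) and (\ref{eq:ineq4}) in place of the right halves, yields $E_D[l_J(\uh,\vh,\wh)]\geq F_S(\uh)-\epsilon$.

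Combining the pieces is then routine. The upper bridge, together with (\ref{eq:ineq2}) in the form $F_S(\uh)\leq F_S(\us)$ and the uniform bound $F_S(\us)\leq F_D(\us)+\epsilon$, gives $E_D[l_J(\uh,\vh,\wh)]\leq F_D(\us)+2\epsilon$. The lower bridge, together with the uniform bound $F_S(\uh)\geq F_D(\uh)-\epsilon$ and (\ref{eq:ineq1}) in the form $F_D(\uh)\geq F_D(\us)$, gives $E_D[l_J(\uh,\vh,\wh)]\geq F_D(\us)-2\epsilon$. Together these yield (\ref{eq:bound}).
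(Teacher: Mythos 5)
Your proof is correct, and every step checks out, but it routes the same ingredients through a different decomposition than the paper. The paper proves each direction of (\ref{eq:bound}) by a single telescoping chain: it writes the risk gap as a difference of two $(E_D-E_S)$ terms and then uses (\ref{eq:ineq2})--(\ref{eq:ineq4}) to drag the $\us$-term to a common evaluation point $(\us,\vh(\us),\ws)$, so that each of the two terms is bounded by the supremum. You instead factor the argument into two reusable pieces: (i) a uniform convergence statement for the minimax \emph{value functions}, $\sup_u|F_D(u)-F_S(u)|\le\epsilon$ with the same constant $\epsilon$ rather than $2\epsilon$, obtained by the one-sided optimizer swaps $\ws(u)\to\wh(u)$ inside $E_D$ and $\vh(u)\to\vs(u)$ inside $E_S$ that collapse the difference onto the single point $(u,\vs(u),\wh(u))$; and (ii) the textbook three-term ERM argument applied to $F_D$ and $F_S$. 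The swap trick in (i) is exactly the trick the paper performs inline, so the mathematical content coincides; what your version buys is modularity --- the value-function bound is a clean intermediate fact the paper never isolates, and once you have it the final bookkeeping is routine. The one inefficiency is your ``bridge'': since $F_S(\uh)=E_S[l_J(\uh,\vh,\wh)]$ by the definition of the empirical optimizers, the bound $|E_D[l_J(\uh,\vh,\wh)]-F_S(\uh)|\le\epsilon$ is just the definition of $\epsilon$ evaluated at $(\uh,\vh,\wh)$, and the detour through the right and left halves of (\ref{eq:ineq3})--(\ref{eq:ineq4}) is unnecessary (though not wrong).
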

\begin{proof}
The expected risk of empirical risk optimizers $(\uh,\vh,\wh)$ is 
upper-bounded by the risk of expected risk optimizers $(\us,\vs,\ws)$ as follows:
\begin{eqnarray*}
&& E_D[l_J(\uh,\vh,\wh)] - E_D[l_J(\us,\vs,\ws)] \\
&=& E_D[l_J(\uh,\vh,\wh)] -E_S[l_J(\uh,\vh,\wh)] - \left(E_D[l_J(\us,\vs,\ws)] -E_S[l_J(\uh,\vh,\wh)]\right)\\
&\leq& E_D[l_J(\uh,\vh,\wh)] -E_S[l_J(\uh,\vh,\wh)] 
- \left(E_D[l_J(\us,\vs,\ws)] -E_S[l_J(\us,\vh(\us),\wh(\us))]\right) \;\;(\mathrm{from}\;(\ref{eq:ineq2}))\\
&\leq& E_D[l_J(\uh,\vh,\wh)] -E_S[l_J(\uh,\vh,\wh)]
- \left(E_D[l_J(\us,\vh(\us),\ws)] -E_S[l_J(\us,\vh(\us),\wh(\us))]\right)\;\; (\mathrm{from}\;(\ref{eq:ineq3}))\\
&\leq& E_D[l_J(\uh,\vh,\wh)] -E_S[l_J(\uh,\vh,\wh)]
- \left(E_D[l_J(\us,\vh(\us),\ws)] -E_S[l_J(\us,\vh(\us),\ws)]\right)\;\; (\mathrm{from}\;(\ref{eq:ineq4}))\\
&\leq& 2\sup_{u,v,w} \left| E_D[l_J(u,v,w)] - E_S[l_J(u,v,w)] \right|.
\end{eqnarray*}
The difference can also be lower-bounded as follows:
\begin{eqnarray*}
&& E_D[l_J(\us,\vs,\ws)] - E_D[l_J(\uh,\vh,\wh)] \\
&=& E_D[l_J(\us,\vs,\ws)] - E_S[l_J(\uh,\vh,\wh)] - \left(E_D[l_J(\uh,\vh,\wh)]-E_S[l_J(\uh,\vh,\wh)]\right)\\
&\leq & E_D[l_J(\uh,\vs(\uh),\ws(\uh))] - E_S[l_J(\uh,\vh,\wh)] - \left(E_D[l_J(\uh,\vh,\wh)]-E_S[l_J(\uh,\vh,\wh)]\right) \;\;(\mathrm{from}\;(\ref{eq:ineq1}))\\
&\leq & E_D[l_J(\uh,\vs(\uh),\ws(\uh))] - E_S[l_J(\uh,\vs(\uh),\wh)] - \left(E_D[l_J(\uh,\vh,\wh)]-E_S[l_J(\uh,\vh,\wh)]\right) \;\;(\mathrm{from}\;(\ref{eq:ineq4}))\\
&\leq & E_D[l_J(\uh,\vs(\uh),\wh)] - E_S[l_J(\uh,\vs(\uh),\wh)] - \left(E_D[l_J(\uh,\vh,\wh)]-E_S[l_J(\uh,\vh,\wh)]\right) \;\;(\mathrm{from}\;(\ref{eq:ineq3}))\\
&\leq& 2\sup_{u,v,w} \left| E_D[l_J(u,v,w)] - E_S[l_J(u,v,w)] \right|.
\end{eqnarray*}
\end{proof}
To bound the RHS of (\ref{eq:bound}), one can use the Rademacher complexity theory 
(e.g., Lemma 26.2 of \cite{Shalev-Shwartz:2014}.)
\begin{lemma}\label{lemma:rademacher}
Let $F$ be a class of real-valued functions, and let
$S$ be a set of $N$ samples $S = \{(x_1,y_1),...,(x_N,y_N)\}$.
Then, 
\begin{equation}
E_{S \sim D^N} \left[ \sup_{f \in F} |E_D[f] - E_S[f]| \right] 
\leq 2 E_{S \sim D^N}[\mathfrak{R}(F \circ S)],
\end{equation}
where $\mathfrak{R}(F \circ S)$ is the empirical Rademacher complexity 
\begin{equation}
{\mathfrak{R}}(f \circ S) \triangleq \frac{1}{N} E_{\sigma \sim \{-1,+1\}^N}
\left[ \sup_{f \in F} \sum_{i=1}^N \sigma_i f(x_i,y_i)\right]
\end{equation}
for the class of real-valued functions 
$\{(x,y) \mapsto f(x,y)\;:\;\forall f \in F\}$.
\end{lemma}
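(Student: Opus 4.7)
The plan is to use the classical symmetrization argument combined with the introduction of Rademacher variables, which is the standard route to bounds of this form. The idea is to replace the unknown expectation $E_D[f]$ with an empirical average over an independent ``ghost sample,'' and then exploit the symmetry between the two empirical averages via random signs.

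Concretely, I would proceed in three steps. First, introduce a ghost sample $S' = \{(x_1',y_1'),\dots,(x_N',y_N')\}$ drawn i.i.d. from $D$, independently of $S$. Since $E_D[f] = E_{S'}[E_{S'}[f]]$, the convexity of both $|\cdot|$ and $\sup$, together with Jensen's inequality, yields
\begin{equation}
E_S\!\left[\sup_{f\in F} |E_D[f] - E_S[f]|\right] \;\leq\; E_{S,S'}\!\left[\sup_{f\in F} |E_{S'}[f] - E_S[f]|\right].
\end{equation}
Second, because $(x_i,y_i)$ and $(x_i',y_i')$ are i.i.d., the joint distribution of the differences $f(x_i',y_i')-f(x_i,y_i)$ is invariant under sign flips. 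Introducing i.i.d. Rademacher variables $\sigma_i \in \{-1,+1\}$ independent of $(S,S')$ and using this exchangeability, the previous expression equals
\begin{equation}
E_{S,S',\sigma}\!\left[\sup_{f\in F} \left|\tfrac{1}{N}\sum_{i=1}^N \sigma_i\bigl(f(x_i',y_i') - f(x_i,y_i)\bigr)\right|\right].
\end{equation}
Third, split this via the triangle inequality $|a-b|\leq |a|+|b|$ and the subadditivity of the sup, and observe that both resulting terms are identical in distribution to $E_S[\mathfrak{R}(F\circ S)]$ after using the symmetry of $\sigma$ (which absorbs the outer absolute value when transferring to the sup inside the Rademacher complexity). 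Summing the two contributions delivers the factor of $2$.

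The main obstacle is the reconciliation between the absolute value on the left-hand side and the one-sided $\sup$ in the definition of $\mathfrak{R}(F\circ S)$ given in the lemma statement. The symmetrization step gives a two-sided quantity cleanly, but passing from $\sup_f |\sum\sigma_i f(x_i,y_i)|$ back to the one-sided $\sup_f \sum\sigma_i f(x_i,y_i)$ requires invoking the symmetry $\sigma \stackrel{d}{=} -\sigma$, which equates $E_\sigma[\sup_f \sum \sigma_i f]$ with $E_\sigma[\sup_f \sum (-\sigma_i) f]$, and then combining the two halves. Beyond this bookkeeping, the argument is routine, and once the bound is established it can be plugged directly into Theorem~\ref{thm:representativeness} to convert the uniform-deviation bound into a generalization guarantee for the empirical minimax filter $(\uh,\vh,\wh)$.
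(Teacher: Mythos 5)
The paper does not actually prove this lemma; it is quoted (as Lemma 26.2) from Shalev-Shwartz and Ben-David (2014), so there is no in-paper argument to compare against. Your symmetrization proof --- ghost sample, Jensen's inequality, random signs exploiting the exchangeability of $(x_i,y_i)$ and $(x_i',y_i')$, then a triangle-inequality split producing the factor $2$ --- is precisely the standard proof of that textbook result, so at the level of strategy you have reconstructed the right argument.

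The one place your sketch does not close is the step you yourself flag: reconciling the two-sided quantity $\sup_f|\cdot|$ on the left with the one-sided $\sup_f\sum_i\sigma_i f(x_i,y_i)$ appearing in the stated definition of $\mathfrak{R}(F\circ S)$. After the triangle-inequality split you are left with $2\,E_{S,\sigma}\bigl[\sup_f\bigl|\tfrac1N\sum_i\sigma_i f(x_i,y_i)\bigr|\bigr]$, and the inequality you would need, namely that this absolute-value complexity is dominated by the one-sided one, goes the wrong way: $\sup_f|\sum_i\sigma_i f|\ge\sup_f\sum_i\sigma_i f$ pointwise in $\sigma$. The symmetry $\sigma\overset{d}{=}-\sigma$ only shows that the two one-sided halves have equal expectation; bounding their maximum by their sum requires both to be nonnegative (e.g., $0\in F$) and in any case costs an additional factor of $2$. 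Indeed, as literally stated with the one-sided complexity on the right-hand side, the lemma fails for a singleton class $\{f_0\}$: the right-hand side is zero while the left-hand side is of order $N^{-1/2}$. The clean fixes are either to prove the one-sided statement $E_{S}\bigl[\sup_f(E_D[f]-E_S[f])\bigr]\le 2E_{S}[\mathfrak{R}(F\circ S)]$ --- which is what the cited Lemma 26.2 actually asserts, and which can be applied to both $F$ and $-F$ where the two-sided version is needed downstream --- or to keep the absolute value inside the definition of the Rademacher complexity, which is in fact the convention the paper silently switches to in its proof of Lemma~\ref{lemma:rademacher sum}. So your route is the correct one; the loose end is largely a defect of the statement as transcribed in the paper, but the patch you propose for it does not work as written.
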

Consider the class of real-valued functions defined from the joint loss (\ref{eq:joint loss}):
\begin{eqnarray}
l_J \circ H_J \circ S &\triangleq& \{(x,y,z) \mapsto 
l_J(x,y,z;u,v,w)\;:\;u\in \Uc, v \in \Vc, w \in \Wc\}\\
&=& \{(x,y,z) \mapsto -l_p(h_v(g_u(x)),y) + \rho\;l_u(h_w(g_u(x)),z)\;:\;
u \in \Uc, v \in \Vc, w \in \Wc\}. \nonumber
\end{eqnarray}
Let ${\mathfrak{R}}(l_J \circ H_J \circ S)$
denote the empirical Rademacher complexity of the joint loss class.
Furthermore, the Rademacher complexity of sum of functions can be upper-bounded 
by the sum of complexities:
\begin{lemma}\label{lemma:rademacher sum}
The empirical Rademacher complexity of the joint privacy-utility loss is 
upper-bounded as
\begin{equation}
\mathfrak{R}(l_J \circ H_J \circ S)
\leq \mathfrak{R}(l_p\circ H_p \circ G \circ S) + \rho\; \mathfrak{R}(l_u\circ H_u \circ G \circ S),\;\;\;(\rho>0)
\end{equation}
where 
\begin{eqnarray}
l_p \circ H_p \circ G \circ S 
&\triangleq& \{(x,y,z) \mapsto l_p(h_p(g_u(x)),y)\;:\;u \in \Uc, v \in \Vc\}, \\
l_u \circ H_u \circ G \circ S 
&\triangleq& \{(x,y,z) \mapsto l_u(h_u(g_u(x)),z)\;:\;u \in \Uc, w \in \Wc\}.
\end{eqnarray}
\end{lemma}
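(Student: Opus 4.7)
The plan is to expand the empirical Rademacher complexity on the left-hand side using its definition, then apply two standard facts: subadditivity of the supremum, and symmetry of the Rademacher distribution. First I would write
\begin{equation}
\mathfrak{R}(l_J \circ H_J \circ S) = \frac{1}{N}\, E_{\sigma}\!\left[\sup_{u,v,w}\sum_{i=1}^N \sigma_i\!\left(-l_p(h_p(g_u(x_i);v),y_i) + \rho\, l_u(h_u(g_u(x_i);w),z_i)\right)\right].
\end{equation}
Because the summand splits as a sum of two terms, one depending on $(u,v)$ and the other on $(u,w)$, I would apply the elementary inequality $\sup(A+B)\le \sup A+\sup B$, taking the two suprema over $(u,v)$ and $(u,w)$ independently. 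This relaxes the constraint that the \emph{same} $u$ appear in both summands but is exactly what is needed to match the two classes on the right-hand side.

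Next I would handle each of the two resulting expectations. The utility term becomes $(\rho/N)\,E_\sigma[\sup_{u,w}\sum_i \sigma_i\, l_u(h_u(g_u(x_i);w),z_i)]$; since $\rho>0$, it can be pulled outside the supremum and the expectation to give $\rho\,\mathfrak{R}(l_u\circ H_u\circ G\circ S)$ by definition. For the privacy term, the leading minus sign is removed via Rademacher symmetry: the joint distribution of $(-\sigma_1,\ldots,-\sigma_N)$ coincides with that of $(\sigma_1,\ldots,\sigma_N)$, so substituting $\sigma_i\mapsto-\sigma_i$ inside the expectation leaves its value unchanged and absorbs the sign. What remains is $\mathfrak{R}(l_p\circ H_p\circ G\circ S)$. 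Adding the two bounds yields the stated inequality.

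The only subtle step, and the main obstacle if there is one, is the coupling of the filter parameter $u$ across the two losses in the original supremum: the subadditivity inequality is precisely where one trades a potentially tighter coupled bound for a decoupled sum, and this is essential because the two classes on the right-hand side are defined with $u$ independently chosen in each. The positivity of $\rho$ is also used, since only then can the scalar be pulled out of the supremum without flipping any inequality. No further technical machinery is needed.
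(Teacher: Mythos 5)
Your proof is correct and follows essentially the same route as the paper's: both hinge on the subadditivity of the supremum to decouple the filter parameter $u$ across the two loss terms, which is exactly the step you identify as the crux. The only cosmetic difference is how the minus sign on $l_p$ is absorbed --- you invoke the sign-symmetry of the Rademacher variables, while the paper works with the absolute-value form of the empirical Rademacher complexity so that $|{-}A| = |A|$ handles the sign directly.
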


\begin{proof}
\begin{eqnarray}
\mathfrak{R}(l_J \circ H_J \circ S)&=&\frac{1}{N} E_{\sigma}\left[ \sup_{u,v,w} \left| \sum_{i=1}^N \sigma_i l_J(x_i,y_i,z_i;u,v,w) \right| \right]\\
&=& \frac{1}{N} E_{\sigma}\left[ \sup_{u,v,w} \left| \sum_{i=1}^N \sigma_i (-l_p(x_i,y_i;u,v) + \rho\;l_u(x_i,z_i;u,w)) \right| \right]\\
&\leq& \frac{1}{N} E_{\sigma}\left[ \sup_{u,v} \left| \sum_{i=1}^N \sigma_i l_p(x_i,y_i;u,v)\right| + \rho\;\sup_{u,w} \left| \sum_{i=1}^N \sigma_i l_u(x_i,z_i;u,w) \right| \right]\\
&=& \mathfrak{R}(l_p\circ H_p \circ G \circ S) + \rho\; \mathfrak{R}(l_u\circ H_u \circ G \circ S).
\end{eqnarray}
\end{proof}
\if0
Lastly, the McDiarmid's inequality is used: 
\begin{lemma}\label{lemma:mcdiarmid}
Let $V$ be some set and let $f : V^m \to \mathbb{R}$
be a function of $m$ variables such that for some $c > 0$, for all $i \in \{1,..,m\}$
and for all $x_1, ... , x_m,\;\;x_i' \in V$  we have
\begin{equation}
|f(x_1 , ... , x_m)-f(x_1,... , x_{i-1}, x_i' , x_{i+1},..., x_m )| \leq c.
\end{equation}
Let $X_1,..., X_m$ be $m$ independent random variables taking values in $V$.
Then, with probability of at least $1-\delta$ we have
\begin{equation}
|f(X_1 , ... , X_m)-E[f(X_1,...,X_m)]| \leq c\sqrt{\ln(2/\delta) m/2}.
\end{equation}
\end{lemma}
\fi
From Theorem~\ref{thm:representativeness} and Lemmas~\ref{lemma:rademacher} and 
~\ref{lemma:rademacher sum},
we get the following generalization bounds in terms of the Rademacher complexity.
\begin{theorem}\label{thm:generalization bounds}
\begin{eqnarray}
&&E_{S \sim D^m}\left[|E_D[l_J(\us,\vs,\ws)] - E_D[l_J(\uh,\vh,\wh)]|\right]\nonumber\\
&&\;\;\;\;\;\;\;\;\;\;\;\;\leq \;\;4 
E_{S \sim D^m}\left[
\mathfrak{R}(l_p\circ H_p \circ G \circ S) + \rho\; \mathfrak{R}(l_u\circ H_u \circ G \circ S)\right].
\end{eqnarray}
\if0
Furthermore, suppose the loss functions are bounded ($|l_p| \leq c$ and
$|l_u|\leq c$.) Then, with probability of at least 1-$\delta$, we have
\begin{eqnarray}
|E_D[l_J(\us,\vs,\ws)] - E_D[l_J(\uh,\vh,\wh)]|
&\leq& 4 E_{S \sim D^m}\left[
\mathfrak{R}(l_p\circ H_p \circ G \circ S) + \rho\; \mathfrak{R}(l_u\circ H_u \circ G \circ S)\right]\nonumber\\
&& +\; c \sqrt{\frac{2 \log(2/\delta)}{m}}.
\end{eqnarray}
\fi
\end{theorem}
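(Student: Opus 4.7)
The proof plan is to chain the three results already developed earlier in the excerpt. The outermost step takes expectation over $S \sim D^m$ in Theorem~\ref{thm:representativeness}, which already bounds the pointwise risk gap $|E_D[l_J(\us,\vs,\ws)] - E_D[l_J(\uh,\vh,\wh)]|$ by twice the supremum $\sup_{u,v,w}|E_D[l_J(u,v,w)] - E_S[l_J(u,v,w)]|$. Taking expectations on both sides gives a $2\,E_{S\sim D^m}[\sup_{u,v,w}|E_D[l_J] - E_S[l_J]|]$ bound, with no extra work required because the inequality holds sample-wise.

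Next I would apply Lemma~\ref{lemma:rademacher} to the function class $l_J \circ H_J \circ S$ indexed by $(u,v,w) \in \Uc \times \Vc \times \Wc$. This immediately upgrades the expected supremum into $2\,E_{S \sim D^m}[\mathfrak{R}(l_J \circ H_J \circ S)]$, picking up a factor of $2$. Combined with the factor of $2$ from the previous step, we have accumulated a factor of $4$ in front of the empirical Rademacher complexity of the joint loss class.

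Finally I invoke Lemma~\ref{lemma:rademacher sum} to replace $\mathfrak{R}(l_J \circ H_J \circ S)$ by $\mathfrak{R}(l_p \circ H_p \circ G \circ S) + \rho\, \mathfrak{R}(l_u \circ H_u \circ G \circ S)$, yielding the stated bound. Note that Lemma~\ref{lemma:rademacher sum} does this decomposition exactly for the joint loss $l_J = -l_p + \rho\, l_u$, so no additional absolute-value manipulation or constant tracking is needed here.

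None of these steps present a real obstacle, since the substantive work was already done: Theorem~\ref{thm:representativeness} handled the nonstandard min-diff-max structure by reducing it to uniform deviation of the joint loss, and Lemma~\ref{lemma:rademacher sum} absorbed the decoupling of privacy and utility components. The only mild subtlety is making sure the filter parameter $u$ is shared between the privacy and utility classes on the right-hand side of Lemma~\ref{lemma:rademacher sum}; this coupling is harmless because the supremum-of-sum is bounded by the sum-of-suprema over independent copies of $u$, which is exactly the content of that lemma's proof. The rest is composition.
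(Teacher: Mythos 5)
Your proposal is correct and matches the paper's own (implicit) proof exactly: the paper derives Theorem~\ref{thm:generalization bounds} by chaining Theorem~\ref{thm:representativeness}, Lemma~\ref{lemma:rademacher}, and Lemma~\ref{lemma:rademacher sum} in precisely the order and with the same constant bookkeeping ($2\times 2 = 4$) that you describe. No gaps.
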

A probabilistic bound instead of expected value can also be obtained by
applying McDiarmid's inequality, which is omitted. 

The Rademacher complexity of privacy and utility losses depends on our
choice of loss functions, hypothesis classes, and filter classes.
For the simple case of linear filters and
linear classifiers, one can compute the complexity using the following lemmas
(26.9 and 26.10 from \cite{Shalev-Shwartz:2014}):
\begin{lemma}
Suppose $\phi:\mathbb{R} \to \mathbb{R}$ is $\alpha$-Lipschitz,
i.e., $|\phi(a)-\phi(b)| \leq \alpha |a - b|,\;\forall a,b \in \mathbb{R}$.
Then, 
\begin{equation}
\mathfrak{R}(\phi\circ F) = \alpha \mathfrak{R}(F).
\end{equation}
\end{lemma}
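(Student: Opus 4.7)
The plan is to prove this by the standard Ledoux--Talagrand contraction argument, peeling off one Rademacher variable at a time. First I would expand the empirical Rademacher complexity
\[
\mathfrak{R}(\phi\circ F) \;=\; \frac{1}{N}\,E_\sigma\!\left[\sup_{f\in F}\sum_{i=1}^N \sigma_i\,\phi(f(x_i))\right],
\]
fix the sample points $x_1,\ldots,x_N$, and condition on $\sigma_1,\ldots,\sigma_{N-1}$, averaging only over $\sigma_N\in\{-1,+1\}$. Writing $A(f)=\sum_{i<N}\sigma_i\phi(f(x_i))$, the conditional expectation becomes
\[
\tfrac{1}{2}\sup_{f}[A(f)+\phi(f(x_N))] + \tfrac{1}{2}\sup_{f'}[A(f')-\phi(f'(x_N))] \;=\; \tfrac{1}{2}\sup_{f,f'}\bigl[A(f)+A(f')+\phi(f(x_N))-\phi(f'(x_N))\bigr].
\]

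The key step is then to use the $\alpha$-Lipschitz bound $\phi(f(x_N))-\phi(f'(x_N))\le \alpha\,|f(x_N)-f'(x_N)|$, and to eliminate the absolute value by the usual symmetrization trick: the expression $A(f)+A(f')+\alpha|f(x_N)-f'(x_N)|$ is symmetric in the pair $(f,f')$, so the pair-supremum coincides with the pair-supremum of $A(f)+A(f')+\alpha(f(x_N)-f'(x_N))$. Rewriting this back as a sum of two suprema and reassembling with the remaining conditional expectation yields
\[
E_{\sigma_N}\!\left[\sup_{f}\Bigl(A(f)+\sigma_N\,\alpha\,f(x_N)\Bigr)\right],
\]
which is the same quantity but with $\phi$ replaced by $\alpha\cdot\mathrm{id}$ in the $N$-th coordinate.

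Iterating this replacement for $i=N,N-1,\ldots,1$ (each peel is identical because the remaining coordinates still carry their own $\sigma_i\phi(f(x_i))$ which can be peeled in turn), I would arrive at $\mathfrak{R}(\phi\circ F)\le \alpha\,\mathfrak{R}(F)$. The main obstacle is the desymmetrization step that removes the absolute value: it is the only non-algebraic move and must be executed carefully, since without it one only obtains a factor $2\alpha$. If the ambient definition of $\mathfrak{R}$ uses an outer absolute value (as in the proof of Lemma~\ref{lemma:rademacher sum}), I would first reduce to the absolute-value-free form by replacing $F$ with $F\cup(-F)$; since $-\phi$ is also $\alpha$-Lipschitz, the same peeling argument applies to both halves.

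I should also remark that as stated the lemma asserts equality, but the argument above (and what is actually needed downstream) is the inequality $\mathfrak{R}(\phi\circ F)\le \alpha\,\mathfrak{R}(F)$; equality is not true in general (take $\phi$ constant) and the ``$=$'' in the statement is best read as ``$\le$''.
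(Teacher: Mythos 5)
Your proposal is correct. The paper does not prove this lemma itself---it is quoted verbatim as Lemma~26.9 of Shalev-Shwartz and Ben-David (2014)---and your argument is exactly the standard Ledoux--Talagrand contraction proof given there: peel off one Rademacher variable at a time, apply the Lipschitz bound, and use the symmetry of the pair-supremum in $(f,f')$ to discard the absolute value, which is indeed the one step that must be done carefully to avoid a spurious factor of $2$. Your remark that the stated equality should read ``$\leq$'' is also correct (a constant $\phi$ is a counterexample to equality), and since the subsequent corollary only uses the upper bound, nothing downstream in the paper is affected.
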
 
\begin{lemma}
For the class of linear classifiers $H = \{x \mapsto w^Tx\;:\;\|w\|_2 \leq 1\}$, 
\begin{equation}
\mathfrak{R}(H \circ S) \leq \frac{1}{\sqrt{N}} \sup_{x\in S} \|x\|_2 .
\end{equation}.
\end{lemma}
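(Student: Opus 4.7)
The plan is to start from the definition of the empirical Rademacher complexity and reduce the supremum over $w$ to a norm of a random sum via Cauchy--Schwarz. Concretely, I would write
\begin{equation*}
\mathfrak{R}(H \circ S) \;=\; \frac{1}{N}\, E_\sigma\!\left[\sup_{\|w\|_2 \leq 1} \sum_{i=1}^N \sigma_i\, w^T x_i\right]
\;=\; \frac{1}{N}\, E_\sigma\!\left[\sup_{\|w\|_2 \leq 1} w^T \!\Bigl(\sum_{i=1}^N \sigma_i x_i\Bigr)\right].
\end{equation*}
Since the supremum of a linear functional over the unit $\ell_2$ ball equals the $\ell_2$ norm of its defining vector, this becomes $\tfrac{1}{N} E_\sigma\bigl[\|\sum_i \sigma_i x_i\|_2\bigr]$.

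Next I would apply Jensen's inequality to move the expectation inside the square root: $E_\sigma[\|\sum_i \sigma_i x_i\|_2] \leq \sqrt{E_\sigma[\|\sum_i \sigma_i x_i\|_2^2]}$. The inner expectation expands as $\sum_{i,j} E[\sigma_i \sigma_j]\, x_i^T x_j$, and independence of the Rademacher variables together with $E[\sigma_i^2]=1$ collapses this to $\sum_{i=1}^N \|x_i\|_2^2$.

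Finally I would bound $\sum_i \|x_i\|_2^2 \leq N \sup_{x \in S}\|x\|_2^2$, so that the whole quantity is at most
\begin{equation*}
\frac{1}{N}\sqrt{N \sup_{x\in S} \|x\|_2^2} \;=\; \frac{1}{\sqrt{N}}\, \sup_{x\in S}\|x\|_2,
\end{equation*}
which is the claimed bound.

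There is no real obstacle here; the argument is a standard symmetrization-style computation. The only step that requires a moment of care is the passage from $\sup_{\|w\|_2 \leq 1} w^T z$ to $\|z\|_2$ (attained at $w = z/\|z\|_2$, with the convention that the supremum is $0$ when $z=0$), and ensuring that Jensen is being applied to the concave square-root function in the correct direction.
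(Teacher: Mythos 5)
Your proof is correct and is exactly the standard argument behind this lemma, which the paper does not prove itself but imports as Lemma 26.10 of Shalev--Shwartz and Ben-David (2014): dual-norm identification of the supremum, Jensen, orthogonality of the Rademacher variables, and a trivial bound on $\sum_i \|x_i\|_2^2$. No gaps; the two points of care you flag (the $z=0$ convention and the direction of Jensen for the concave square root) are handled correctly.
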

\if0
\begin{proof}
Suppose $H = \{ z \mapsto w^Tz\;:\;\|w\|_2 = 1 \}$ is a family of linear functions
with unit $2$-norm.
Then, 
\begin{eqnarray}
\mathfrak{R}(H\circ G \circ S) &=& 
\frac{1}{m} E[\sup_{h\in H, g \in G} \sum_i \sigma_i w^T g(x_i)] 
=\frac{1}{m} E[\sup_{w, g} w^T \sum_i \sigma_i g(x_i) ]\\
&\leq&\frac{1}{m} E[\sup_{g} \|\sum_i \sigma_i g(x_i)\|_2].
\end{eqnarray}
Note that
\[
E[\|\sum_i \sigma_i g(x_i)\|_2]
\leq \sqrt{E[\|\sum_i \sigma_i g(x_i)\|_2^2]}.
\]
\end{proof}
\fi
From these lemmas and Theorem~\ref{thm:generalization bounds}, we
have a corollary for a simple case of linear filters and classifiers. 
\begin{corollary}
Let the loss functions $l_u$ and $l_p$ be $\alpha$-Lipschitz (e.g., 
$\alpha=1$ for logistic regression.)
Suppose $U$ is a $d \times D$ real matrix with a bounded norm
$\|U\|_2 \leq 1$, and $w$ and $v$ are vectors with bounded norms 
($\|w\|_2 \leq 1$ and $\|v\|_2 \leq 1$). 
If the feature domain $\mathcal{X}$ is also bounded with a radius 
$r = \max_{x \in \mathcal{X}} \|x\|_2$,  then we have
\begin{equation}
|E_D[l_J(\us,\vs,\ws)] - E_D[l_J(\uh,\vh,\wh)]|
\leq \frac{4 (1+\rho)\;\alpha\; r}{\sqrt{N}}.
\end{equation}
\end{corollary}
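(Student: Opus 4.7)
The plan is to chain Theorem~\ref{thm:generalization bounds} together with the two Rademacher-complexity lemmas quoted just above the corollary (the contraction lemma and the linear-class bound). Once the theorem has reduced the generalization gap to an expected sum of empirical Rademacher complexities, everything that remains is a simple norm-composition calculation that exploits linearity of the filter and of the downstream classifier.

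First, I would apply Theorem~\ref{thm:generalization bounds} to obtain
\[
E_{S}\bigl[|E_D[l_J(\us,\vs,\ws)] - E_D[l_J(\uh,\vh,\wh)]|\bigr]
\;\leq\; 4\,E_S\bigl[\mathfrak{R}(l_p\circ H_p \circ G \circ S) + \rho\,\mathfrak{R}(l_u\circ H_u \circ G \circ S)\bigr].
\]
It then suffices to bound each empirical Rademacher complexity by $\alpha r/\sqrt{N}$, uniformly over all $S$ of size $N$ drawn from $\mathcal{X}$; because the bound is deterministic, taking the outer expectation over $S$ changes nothing.

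Next, since $l_p(\cdot,y)$ and $l_u(\cdot,z)$ are $\alpha$-Lipschitz in their first argument, the contraction lemma lets me strip the loss and pick up a factor $\alpha$:
\[
\mathfrak{R}(l_p\circ H_p \circ G \circ S) \;\leq\; \alpha\,\mathfrak{R}(H_p \circ G \circ S),
\qquad
\mathfrak{R}(l_u\circ H_u \circ G \circ S) \;\leq\; \alpha\,\mathfrak{R}(H_u \circ G \circ S).
\]
The remaining function class $H_p \circ G$ is $\{x \mapsto v^{T}Ux : \|U\|_2\leq 1,\ \|v\|_2\leq 1\}$, which I rewrite as $\{x \mapsto w^{T}x : w = U^{T}v\}$. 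The key (and only nontrivial) observation is the sub-multiplicativity bound $\|w\|_2 = \|U^{T}v\|_2 \leq \|U\|_2\|v\|_2 \leq 1$, so the composed class is contained in the unit-norm linear class of the second cited lemma. Hence
\[
\mathfrak{R}(H_p \circ G \circ S) \;\leq\; \frac{1}{\sqrt{N}}\sup_{x\in S}\|x\|_2 \;\leq\; \frac{r}{\sqrt{N}},
\]
and an identical bound holds for $\mathfrak{R}(H_u \circ G \circ S)$.

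Finally, I would combine the pieces: each Rademacher term is at most $\alpha r/\sqrt{N}$, so
\[
E_S\bigl[|E_D[l_J(\us,\vs,\ws)] - E_D[l_J(\uh,\vh,\wh)]|\bigr] \;\leq\; 4\!\left(\frac{\alpha r}{\sqrt{N}} + \rho\,\frac{\alpha r}{\sqrt{N}}\right) \;=\; \frac{4(1+\rho)\alpha r}{\sqrt{N}},
\]
which is the stated corollary. There is no real obstacle here because the filter class is linear; the proof is essentially a bookkeeping exercise. The genuine difficulty hidden in this calculation, and what would become the main obstacle in a nonlinear extension (e.g., multilayer neural network filters as in Fig.~\ref{fig:minimax filter}), is bounding $\mathfrak{R}(H \circ G \circ S)$ when the norm-composition trick $\|U^{T}v\|_2 \leq \|U\|_2\|v\|_2$ is no longer available and one must instead control the Rademacher complexity of a composed nonlinear hypothesis class directly.
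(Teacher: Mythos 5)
Your proposal is correct and follows exactly the route the paper intends (the paper gives no explicit proof, simply stating that the corollary follows from Theorem~\ref{thm:generalization bounds} together with the contraction lemma and the linear-class Rademacher bound): apply the theorem, strip the $\alpha$-Lipschitz losses by contraction, and absorb the composed linear filter and classifier into the unit-norm linear class via $\|U^{T}v\|_2 \leq \|U\|_2\|v\|_2 \leq 1$. The only point worth noting is that your final bound honestly retains the expectation $E_S[\cdot]$ on the left-hand side, which Theorem~\ref{thm:generalization bounds} actually provides, whereas the corollary as printed drops it; a high-probability version would additionally require McDiarmid's inequality, as the paper remarks elsewhere.
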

Alternatively, one can use the VC dimension to specify the bound. 

In any case, the generalization bounds in this section justify the claim 
that minimax filter can preserve utility-privacy of unseen test data 
in expectation or probability.

\if0
If in addition, $G = \{ x \mapsto Ax\;:\;\|A\|_2 = \rho\}$ is a family
of linear dimensionality reduction with bounded 2-norm $\rho$, then
\begin{eqnarray}
\mathfrak{R}(H\circ G \circ S) &\leq& 
\frac{1}{m} E[\sup_{A} \|\sum_i \sigma_i A x_i\|_2]
\leq \frac{1}{m} \sqrt{E[\sup_{A} \|\sum_i \sigma_i A x_i\|^2_2]}\\
&=& \frac{1}{m} \sqrt{E[\sup_{A} \sum_{i\ne j} \sigma_i \sigma_j x_i^TA^TA x_i
+ \sum_i \sigma_i^2 x_i^TA^TAx_i]} \\
&\leq & \frac{1}{m} \sqrt{E[ \sum_{i\ne j} \sigma_i \sigma_j \rho^2 x_i^T x_i
+ \sum_i \sigma_i^2 \rho^2 x_i^Tx_i]} 
=  \frac{1}{m} \sqrt{\sum_i \rho^2 \|x_i\|^2_2} \\
&\leq & \frac{\rho}{m} \sqrt{m \max_i \|x_i\|^2_2}
= \frac{\rho}{\sqrt{m}} \max_i \|x_i\|_2.
\end{eqnarray}
\fi

\section{Minimax Optimization}\label{sec:optimization}

This section presents theoretical and numerical solutions of the joint problem
(\ref{eq:joint goal 2}),
which is a variant of unconstrained continuous minimax problems. 
(See \cite{Rustem:2009} for a review.)
The problem (\ref{eq:joint goal 2}) can be written in an equivalent form
\begin{eqnarray}
\min_u \Phi(u)&=&\min_u[\Phi_{\mathrm{priv}}(u) - \rho\;\Phi_{\mathrm{util}}(u)]\label{eq:Phi}\\
&=&\min_u[\max_{v} -\fp(u,v) - \rho\; \max_{w} -\fu(u,w)]
\end{eqnarray}
The optimization above is a min-diff-max problem and can be considered as
simultaneously solving two subproblems
$\min_u [\max_v -\fp(u,v)]$ and $\min_u [-\max_w -\fu(u,w)]$,
but is evidently not the same as summing individual solutions
\begin{equation}
\min_u \Phi(u) \ne \min_u [\max_v -\fp(u,v)]\;+\; \min_u [-\rho\max_w -\fu(u,w)].
\end{equation}
Since the second subproblem $\min_u [-\max_w -\fu(u,w)]=\min_{u,w} \fu(u,w)$ 
is a standard minimization problem, 
let's focus only on the first subproblem $\min_u [\max_v -\fp(u,v)]$
which is a continuous minimax problem.
Continuous minimax problems are in general more challenging to solve
than standard minimization problems, as the inner optimization
 $\Phi_{\mathrm{priv}}(u)=\max_v -\fp$ 
does not usually have a closed-form solution;
when it does, the whole problem can be treated as a standard minimization problem.
Furthermore, there can be more than one solution to  $\Phi_{\mathrm{priv}}(u)=\max_v -\fp$.
To better understand minimax problems, we look at several examples starting
from a simple case
where $\Phi_{\mathrm{priv}}$ and $\Phi_{\mathrm{util}}$ have closed-form solutions.

\subsection{Simple case: eigenvalue problem}

Consider finding a minimax filter for the following problem.
The filter class is a linear dimensionality reduction ($g(x;u)= U^Tx$)
parameterized by the matrix $U\in \mathbb{R}^{D\times d}$,
and the private and target tasks are least-squares regressions parameterized
by the matrices $V$ and $W$:
\begin{eqnarray}
\fp(U,V) &=& \frac{1}{N} \sum_i \|V^T U^Tx_i - y_i\|^2,\;\;\mathrm{and}\\
\fu(U,W) &=& \frac{1}{N} \sum_i \|W^T U^Tx_i - z_i\|^2.
\end{eqnarray}
In this case, $\Phi_{\mathrm{priv}}(U)=\max_{V} -f_p(U,V)$ and
$\Phi_{\mathrm{util}}(U)=\max_{W} -f_u(U,W)$ are both concave problems with
closed-form solutions
\begin{eqnarray}
\hat{V}&=&\arg\min_V \fp = (U^TC_{xx}U)^{-1} U^TC_{xy}\;\;\mathrm{and}\\
\hat{W}&=&\arg\min_W \fu = (U^TC_{xx}U)^{-1} U^TC_{xz},
\end{eqnarray}
where 
\begin{equation}
C_{xy} = \frac{1}{N} \sum_i x_i y_i^T,\;\;C_{xz} = \frac{1}{N} \sum_i x_i z_i^T,\;\;\mathrm{and}\;\;C_{xx} = \frac{1}{N} \sum_i x_i x_i^T.
\end{equation}
The corresponding min values are
\begin{eqnarray}
\Phi_{\mathrm{priv}}(U) &=& -\fp(U,\hat{V}) =  \mathrm{Tr}\left[ (U^TC_{xx}U)^{-1}U^TC_{xy}C_{xy}^TU\right]  + \mathrm{const},\;\mathrm{and}\\
\Phi_{\mathrm{util}}(U) &=& -\fu(U,\hat{W}) =  \mathrm{Tr}\left[ (U^TC_{xx}U)^{-1}U^TC_{xz}C_{xz}^TU\right]  + \mathrm{const}.
\end{eqnarray}
The outer minimization over $u$ is then
\begin{eqnarray}
\min_U \Phi(U)&=&\min_U\left[\Phi_\mathrm{priv}(U)-\rho \Phi_{\mathrm{util}}(U)\right]\\
&=& \min_U\left[-\fp(U,\hat{V}) + \rho \fu(U,\hat{W})\right]
= \min_U \mathrm{Tr} \left[(U^TC_{xx}U)^{-1}\;U^TC_{xyz} U\right],\label{eq:simple case}
\end{eqnarray}
where 
\begin{equation}
C_{xyz} = C_{xy}C_{xy}^T - \rho C_{xz}C_{xz}^T.
\end{equation}
The problem (\ref{eq:simple case}) can be reformulated as a generalized eigenvalue
problem. 
Let $Q = C_{xx}^{1/2} U$ be a $D \times d$ full-rank matrix. The problem can be
rewritten as
\begin{equation}
\min_{U}\mathrm{Tr} \left[(U^TC_{xx}U)^{-1}\;U^TC_{xyz} U\right]\\
= \min_Q \mathrm{Tr} \left[ (Q^TQ)^{-1} Q^T C_{xx}^{-1/2} C_{xyz} C_{xx}^{-1/2} Q \right].
\end{equation}
Furthermore, note that min value (\ref{eq:simple case}) is invariant to the 
right multiplication of $U$ by any $d \times d$ nonsingular matrix $R$.
So chose $R$ so that $Q^TQ = R^T U^T C_{xx} UR = I_d$ without loss of generality.
Let $A = (C_{xx}^{-1/2})^T C_{xyz} C_{xx}^{-1/2}$, and the minimax problem
becomes the following eigenvalue problem:
\begin{equation}\label{eq:eigenvalue problem}
\min_U \Phi(U)= \min_{\{Q\;|\;Q^TQ=I_d\}}\; \mathrm{Tr}\; Q^T A Q,
\end{equation}
which is the sum of the $d$ smallest eigenvalues of $A$ which may not be
positive semidefinite.
Note that this special case problem is quite similar to the objective of \cite{Enev:2012}:
\[
\max_u \;\;[-\lambda u^TC_{xy}^TC_{xy}u + u^TC_{xz}^TC_{xz}u ],\;\;\mathrm{s.\;t.}\;\;u^Tu = 1.
\]

The paper also proposes a variant of the eigenvalue problem (\ref{eq:eigenvalue problem}), 
called {\bf Privacy LDS} which is an analogue of linear discriminant analysis (LDS)
for privacy-utility optimization problem. 
Define the symmetric positive semidefinite matrix $C_u$ as
\begin{equation}
C_u = \sum_{k=1}^{K} N_k (\mu_k - \mu)(\mu_k - \mu)^T,
\end{equation}
where 
\begin{equation}
z \in \{1,...,K\},\;\; 
N_k = \sum_{i=1}^N I[z_i = k],\;\;
\mu_k = \frac{1}{N_k} \sum_{i=1}^N x_i I[z_i = k],\;\;\mathrm{and}\;\;
\mu = \frac{1}{N} \sum_{i=1}^N x_i,
\end{equation}
Define $C_p$ similarly with
\begin{equation}
y \in \{1,...,K'\},\;\;N_k' = \sum_{i=1}^{N'} I[y_i = k],\;\;\mathrm{and}\;\;\mu_k' =\frac{1}{N'_k} \sum_{i=1}^{N'} x_i I[y_i = k].
\end{equation}
The proposed Privacy LDS is a linear filter $g(x;U) = U^Tx$, 
where $U=[u_1,...,u_d]$ is a matrix of top eigenvectors $u_i$'s from the
following generalized eigenvalue problem: 
\begin{equation}\label{eq:privacy lds}
\max_{\|u\|=1}\;\;\frac{u^T(C_u + \lambda I)u}{u^T(C_p + \lambda I)u}.
\end{equation}
This paper uses Privacy LDS as a heuristic to find the initial linear filter
before fine-tuning the parameter $u$ using a general optimization method 
presented in the following sections. 
Note that this initialization is applicable only to linear filters. 

\subsection{Saddle-point problem}

Continuous minimax problems cannot in general be solved in closed form and
require numerical solvers. 
There is a subclass of continuous minimax problems which are easier to
solve than others. Saddle-point problems are minimax problems 
for which $f(u,v)$ is convex in $u$ and concave in $v$, such as the following
``saddle'' problem
\begin{equation}
\min_u \max_v f(u,v) = \min_u \max_v \;[ u^2 - v^2].
\end{equation}
Analogous to convex problems, 
$f(u,v)$ has a global optimum $(\us,\vs)$ which satisfies
\begin{equation}
f(\us,v) \leq f(\us,\vs) \leq f(u,\vs).
\end{equation}
The convergence rate of a simple subgradient-descent method for saddle-point
problems was previously analyzed by \cite{Nedic:2009}.
Unfortunately, the minimax problem $\min_u \max_v -\fp$ considered in this paper
is not a saddle-point problem even for a relatively simple case.
Suppose one chooses linear filters, convex differentiable losses 
(e.g., least-squares, logistic, or exponential losses) and linear classifiers
for the problem.
Then 
\begin{equation}
-\fp(u,v)=-E[l(g(u);v)]=-E[l(yv^TU^Tx)]
\end{equation}
is the negative expected value of the composition of a convex $l(\cdot)$
and a linear $U^Tx$, which is concave in $U$ and is also concave in $v$, 
which cannot be a saddle-point problem.
\if0
Then $-\fp(u,v)$ is concave in $v$ but not necessarily
convex in $u$, as shown in the following.
With a $d$-dimensional filter output  $g(u) = [g_1(u),...,g_d(u)]$,
the Hessian matrix of the composite function $-\fp=-E[l(g(u);v)]$ w.r.t. $u$ is 
\begin{equation}
H_u(-\fp) = J_u^T(g) H_g(-E[l]) J_u(g) - \sum_k \frac{dE[l]}{dg_k} H_u(g_k),
\end{equation}
where $H(\cdot)$ and $J(\cdot)$ are Hessian and Jacobian matrices.
Since the Hessian $H_u(-\fp)$ is the sum of 1) a quadratic form with 
negative semidefinite $H_g(-E[l])$ by convexity assumption and 2) 
an arbitrary linear combination of $H_u(g_k)$'s, 
it may or may not be positive/negative definite. 
For example, with linear filters, $H_u(g_k)=0$ by linearity and therefore 
$H_u(f) = -J_u^T(g) H_g(E[l]) J_u(g)$ is concave in $u$ instead of convex,
which makes it not a saddle-point problem. 
\fi

\subsection{General problem}

A general numerical solution to the optimization (\ref{eq:Phi}) is 
described in this section.
Let $f(u,v)$ be a real-valued function
$f: \Uc \times \mathcal{V} \to \mathbb{R}$,
where $\Uc$ and $\Vc$ are compact subsets of the Euclidean space.
Suppose $f$ is jointly continuous and has a continuous partial derivative
$\nabla_u f$ w.r.t. the first variable $u$.
The maximum over $v$
\begin{equation}
\Phi(u) = \max_{v \in \mathcal{V}} f(u,v)
\end{equation}
has a property that $\Phi(u)$ is in general not differentiable in $u$ 
even if $f(u,v)$ is \citep{Danskin:1967}.
Suppose $V(u)$ is the set of maximizers of $f$ given $u$:
\begin{equation}
V(u) = \{ \hat{v}\in\mathcal{V} \;|\; f(u,\hat{v}) = \max_{v\in \mathcal{V}} f(u,v)\}.
\end{equation}
Danskin proved that the directional derivative $D_y \Phi(u)$ in any
direction $y\in \mathbb{R}^d$ can be written as the maximum directional
derivatives of $f(u,v)$ over all $\hat{v} \in V(u)$:
\begin{equation}
D_y \Phi(u) = \max_{\hat{v} \in V(u)} {D}_y f(u,\hat{v}),
\end{equation}
where $D_y f(u,v)$ is the directional derivative of $f$ w.r.t. $u$.
Furthermore, in the case where $V(u)$ is a singleton $\{\hat{v}(u)\}$
for each $u$, we have 
\begin{equation}
D_y \Phi(u) = D_y f(u,\hat{v}(u)).
\end{equation}

There are several classic minimax optimization algorithms using this property.
Suppose $f(u,v)$ is also continuously differentiable w.r.t. $v$, 
and $\nabla_u f$ is continuously differentiable w.r.t. $v$. 
A first-order method for minimax problems was proposed by \cite{Panin:1981}
and was later refined by \cite{Kiwiel:1987}.
The latter uses a linear approximation of $f$ at a fixed $\bar{u}$ along the direction $q$
\begin{equation}\label{eq:approximated f}
f^l (q,v) = f(\bar{u},v) + \langle \nabla_u f(\bar{u},v),\; q \rangle,
\end{equation}
and uses it to compute the approximate max value
\begin{equation}\label{eq:approximated phi}
\Phi^l(q) = \max_{v} f^l(q,v).
\end{equation}
Using this approximation, a line search can be performed along the descent direction $q$
that minimizes the max function $\Phi(\bar{u}+\alpha q)$.
In particular, with additional assumptions of Lipschitz continuity of $\nabla_u f$ and
compactness of $\Uc$ and $\Vc$, 
Kiwiel's algorithm monotonically decreases $f$ for each iteration and
converges to a stationary point $u^\ast$, i.e., a point $u$ for which 
$\max_v \langle\nabla_u f(u^\ast,v),\;q\rangle \geq 0$ for all directions $q$.  
Previously, \cite{Hamm:2015a} used Kiwiel's algorithm 
to solve the optimization problems (\ref{eq:Phi}).
However, one disadvantage of the method was its slow speed in practice, 
due to the auxiliary routine of finding the descent direction $q$
at each iteration, described in the supplementary material of \cite{Hamm:2015a}.  


Instead, this paper proposes a simple alternating algorithm (Alg.~\ref{alg:alternating})
for solving min-diff-max problem based directly on Danskin's theorem.
The algorithm only assumes $\fp(u,v)$ and $\fu(u,w)$ to be jointly continuous and 
have continuous partial derivatives $\nabla_u \fp$ and $\nabla_u \fu$.
Additionally, if $\fp(u,v)$ and $\fu(u,w)$ are convex in $v$ and $w$ respectively,
then the global minima
\begin{equation}
v_t = \arg\min_v \fp(u_t,v)\;\;\mathrm{and}\;\;w_t = \arg\min_w \fu(u_t,w)
\end{equation}
can be found easily, either approximately or accurately. 
Furthermore, if $\fu$ and $\fp$ are strongly convex (e.g., due to regularization), 
the solutions are unique.
Consequently, the descent direction $q_t$ in Alg.~\ref{alg:alternating} is truly
the (negative) gradient of $\Phi(u)$ (\ref{eq:Phi})
as desired:
\begin{equation}
q_t = \nabla_u \fp(u,v_t)-\rho\nabla_u \fu(u,w_t)
= -\nabla_u \Phi_{\mathrm{priv}}(u) + \rho\;\nabla_u \Phi_{\mathrm{util}}(u)
= - \nabla_u \Phi(u).
\end{equation} 
Note that it is still a heuristic for non-convex $\fu$ and $\fp$ such as when
using neural networks for the filter and/or the classifiers.
A related heuristic for minimax problems was proposed by \cite{Goodfellow:2014}
for learning generative models.

\begin{algorithm}[tbh] \caption{Alternating algorithm for min-diff-max} \label{alg:alternating}
{\it Input}: data $\{(x_i,y_i,z_i)\}$, filter $g$, loss $l$, classifier $h$,
tradeoff coefficient $\rho$, max iteration $T$, learning rates $(\alpha_t)$\\
{\it Output}: optimal filter parameter $u$\\
{\it Begin}:
\begin{algorithmic}
\STATE{Initialize $u_1$}
\FOR{$t=1,...,T$}
\STATE{Solve (approximately) 
\begin{equation}\label{eq:inner optimization}
v_t = \arg\min_v \fp(u_t,v)\;\;\mathrm{and}\;\;w_t = \arg\min_w \fu(u_t,w),\;\;\mathrm{where}
\end{equation}
\begin{equation}\label{eq:inner optimization}
\fp(u,v)=\frac{1}{N}\sum_{i=1}^N l_p(h_p(g(x_i;u);v),y_i)\;\;\mathrm{and}\;\;
\fu(u,w)=\frac{1}{N}\sum_{i=1}^N l_u(h_u(g(x_i;u);w),z_i).
\end{equation}

}
\STATE{Compute the descent direction by
\begin{equation}\label{eq:descent direction}
q_t = \nabla_u \fp(u,v_t)-\rho\nabla_u \fu(u,w_t)
\end{equation}
}
\STATE{Perform line search along $q_t$ and update $u_{t+1} = u_t + \alpha_t\cdot q_t$}
\STATE{Exit if solution converged}
\ENDFOR
\end{algorithmic}
\end{algorithm}
The proposed optimization algorithm and supporting classes are implemented in Python
and are available on the open-source repository\footnote{\url{https://github.com/jihunhamm/MinimaxFilter}}.

\if0
Overall algorithm as per python modules?
Filter class.
Classifier class.
minimax solver.
Alternating optimizer.
\fi

\section{Noisy Minimax Filter}\label{sec:noisy minimax filter}

The privacy guarantee that minimax filter provides is very different from
that of differentially-private mechanisms. 
As the filter is learned from training data, its privacy guarantee for test data
is given only in expectation/probability. 
Besides, it is a deterministic mechanism which cannot provide
differential privacy.
This section presents the {\it noisy minimax filter} that
combines minimax filter with additive noise mechanism to satisfy the
differential privacy criterion. 
Two methods of combination---preprocessing and postprocessing---are 
proposed and compared.
For completeness, the definition of differential privacy is given briefly.

\subsection{Differential privacy}

A randomized algorithm that takes data $\mathcal{D}$ as input 
and outputs $\tilde{f}(\mathcal{D})$ is called $\epsilon$-differentially private if
\begin{equation} \label{eq:differential privacy}
Pr(\tilde{f}(\mathcal{D}) \in \mathcal{S})\leq
e^{\epsilon}{Pr(\tilde{f}(\mathcal{D}') \in \mathcal{S})}
\end{equation}
for all measurable $\mathcal{S} \subset \mathcal{T}$ of the output range
and for all data sets $\mathcal{D}$ and $\mathcal{D}'$ differing in a single item,
denoted by $\mathcal{D} \sim \mathcal{D}'$.
That is, even if an adversary knows the whole data set $\mathcal{D}$ except for
a single item,
she cannot infer much more about the unknown item from the output of 
the algorithm.
A well-known mechanism for turning a non-private function $f$ into a private
function $\tilde{f}$ is the perturbation by additive noise.
When an algorithm outputs a real-valued vector $f(\mathcal{D}) \in \mathbb{R}^D$, 
its global sensitivity \citep{Dwork:2006:TC} is defined as 
\begin{equation}\label{eq:sensitivity}
S(f) = \max_{\mathcal{D}\sim\mathcal{D}'} \|f(\mathcal{D}) - f(\mathcal{D}')\|
\end{equation}
where $\|\cdot\|$ is a norm such as the Euclidean norm.
An important result from \cite{Dwork:2006:TC} 
is that the perturbation by additive noise 
\begin{equation}\label{eq:mechanism}
\tilde{f}(\mathcal{D}) = f(\mathcal{D}) + \xi,
\end{equation}
where $\xi$ has the Laplace-like probability density whose scale parameter is 
proportional to $S(f)$
\begin{equation}\label{eq:sensitivity method}
P(\xi) \propto e^{-\frac{\epsilon}{S(f)}\|\xi\|},
\end{equation} 
is $\epsilon$-differentially private.

This paper considers local differential privacy \citep{Duchi:2013} of the filter
output $g(x)$, that is, perturbation is applied by each subject before $g(x)$
is released to a third party. 
Let $X=\{x_1,...,x_S\}$ be a collection of data from $S$ subjects. 
Then, $X=\{x_1,...,x_S\}$ and $X'=\{x'_1, ...,  x'_S\}$ are defined as neighbors
if $x_i=x'_i$ for all $i=1,...,S$ except for some $j \in 1,...,S$.
For this subject, $x_j$ and $x_j'$ can be any two samples from the common feature space $\mathcal{X}$ of all subjects.
Consequently,
a randomized filter $\tilde{g}(\cdot)$ is $\epsilon$-differentially private if 
for all $x,x' \in \mathcal{X}$ and all measurable $\mathcal{S} \subset \mathcal{T}$
of the output range,
\begin{equation}
Pr(\tilde{g}(x)\in S) \leq e^{\epsilon} Pr(\tilde{g}(x')\in S).
\end{equation}
To use additive noise mechanism (\ref{eq:mechanism}), the sensitivity
(\ref{eq:sensitivity}) of the output $g(\cdot)$ needs to be determined: 
\begin{equation}
S(g) = \sup_{x,x' \in \mathcal{X}} \|g(x)-g(x')\|,
\end{equation}
which is finite if $\mathcal{X}$ is compact and $g(\cdot)$ is continuous.
\if0
Care must be taken when computing the sensitivity $S(g)$, since 
the filter $g(\cdot)$ is learned from training data and is dependent on them. 
This breaks the differential privacy guarantee if one simply uses a trained
$g(\cdot)$ to compute the sensitivity.
To avoid this problem without making more assumptions on $\mathcal{X}$ or $g(\cdot)$,
\fi
If $\mathcal{X}$ is unbounded, one can directly bound the diameter of $g(\mathcal{X})$
by {\it bounding functions}. 
Examples of the bounding function $b: \mathbb{R}^D \to \mathbb{R}^D$ are
\begin{enumerate}\setlength{\itemsep}{-2pt}
\item Hard-bounding by clipping: $b(h) = \min \{1/a,\;1/\|h\|\}\cdot h$ for some $a>0$,
\item Soft-bounding by squashing: $b(h) = \tanh(a\|h\|)$ for some $a>0$, and
\item Normalization after clipping: $b(h) = h/\|h\|$,
\end{enumerate}
where $h = g(x)$ is the filter output of a sample $x\in\mathcal{X}$. 
Note that these functions enforce the sensitivity $S(b(g))$ to be at most 2, 
regardless of $\mathcal{X}$ or $g(\cdot)$. 
The threshold $a$ can be determined from the training data.



\subsection{Preprocessing vs postprocessing}\label{sec:pre vs post}

\begin{figure}[tb]
\centering
\includegraphics[width=0.6\linewidth]{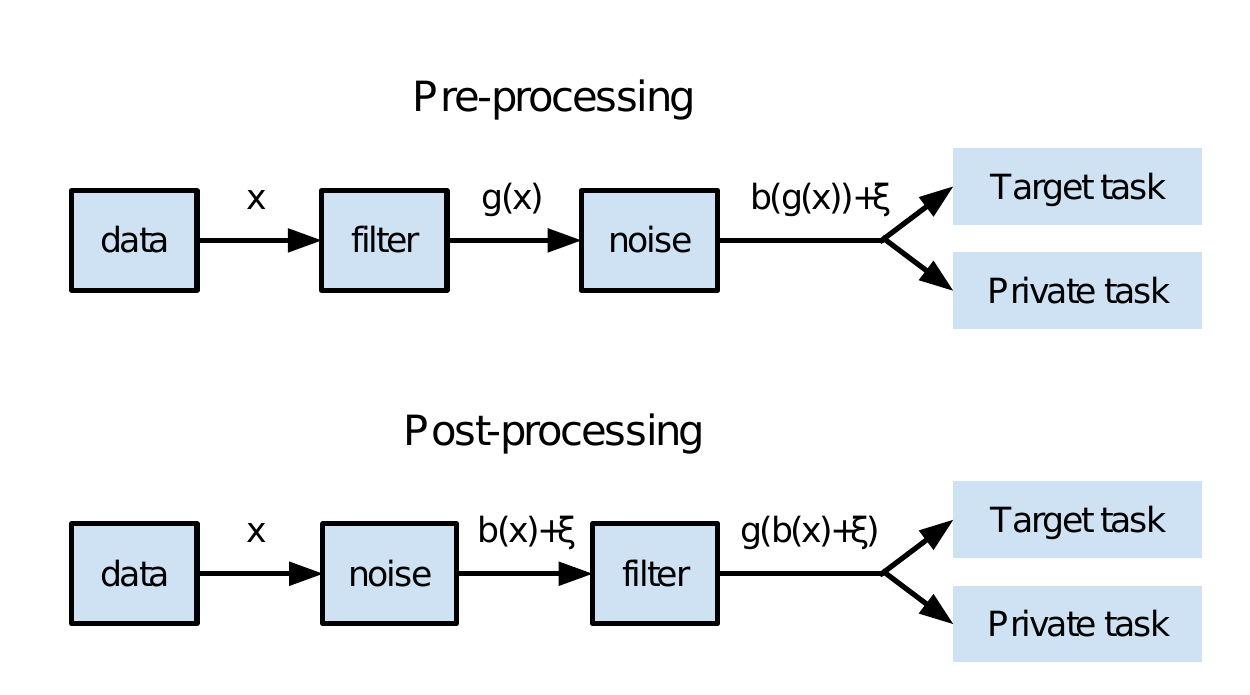}
\caption{Preprocessing and postprocessing approaches to differentially private minimax filtering.
}
\label{fig:preprocessing}
\end{figure}

Minimax filters can be made locally differentially private using the additive
noise mechanism (\ref{eq:mechanism}) in the signal chain of filtering.
The paper proposes two approaches. In the {\it preprocessing} approach,
filtering is performed first and is followed by perturbation.
In the {\it postprocessing} approach, perturbation is applied first and is followed by filtering. 
Note that preprocessing and postprocessing approaches are similar to 
output perturbation and input perturbation in \cite{Sarwate:2013:ISPM}.
Fig.~\ref{fig:preprocessing} shows the signal chains of the two approaches. 
In preprocessing, the original feature $x$ is first filtered by $g(x)$,
and then made differential private by a bounding function and perturbation
$b(g(x))+\xi$.
In postprocessing, the original feature $x$ is first made differentially
private by a bounding function and perturbation $b(x)+\xi$ followed by filtering $g(b(x)+\xi)$.
By adding an appropriate amount of noise, both approaches can be made 
$\epsilon$-differentially private regardless of data distribution.
However, when the noisy mechanism is used in conjunction with a minimax filter
which is dependent on data distribution $P(x,y,z)$, 
preprocessing and postprocessing approaches have different effects that depend
on the distribution.

A scenario when preprocessing is preferable to postprocessing is as follows. 
For the convenience of explanation, let's assume that subject identification is
the private task. Let $y(x)$ be the subject identity label of sample $x$ 
and let $z(x)$ be the target label of sample $x$ for any target task. 
Define {\it between-subject diameter} as the max distance of two samples $x, x'$
from different subjects that have the same target label:
\begin{equation}\label{eq:Sb}
S_b\triangleq\max_{x,x'\in\mathcal{X}} \|x-x'\|\;\;\;\mathrm{s. t.}\;\;\; y(x)\ne y(x'),\; z(x)=z(x').
\end{equation}
Similarly, define {\it within-subject diameter}
as the max distance of two samples $x, x'$ from the same subject that have 
different target labels:
\begin{equation}\label{eq:Sw}
S_w\triangleq\max_{x,x'\in \mathcal{X}} \|x-x'\|\;\;\;\mathrm{s. t.}\;\;\;y(x)=y(x'),\;z(x)\ne z(x').
\end{equation}
Also for the purpose of explanation, assume that the filter $g$ is an orthogonal projection
onto a lower-dimensional Euclidean space.
For a given data set $\mathcal{X}$, if the between-subject diameter is larger than 
the within-subject diameter ($S_b > S_w$) in the original feature space
(Fig.~\ref{fig:sensitivity}a), 
then minimax filtering can potentially reduce the diameter
$S(g) = \max_{x,x'} \|g(x)-g(x')\|$ significantly.
This translates to less amount of noise required to achieve the same 
$\epsilon$-privacy than the amount of noise 
required before filtering, as the data diameter has shrunk.
This will result in better utility of the preprocessing approach over
the postprocessing approach where noise is added before filtering. 
From the same reasoning, if the opposite is true ($S_w > S_b$) (Fig.~\ref{fig:sensitivity}b),
then the diameter $S(g)$ after minimax filtering does not change much, 
and the preprocessing approach may not offer much benefit over the postprocessing approach.
However, there are still other differences between the two approaches. 
This paper assumes that the training data are public information and their privacy
is not the primary concern unlike the privacy of test data. 
However, if we begin to consider the privacy of training data as well, then one should
be aware that the learned filters can leak private information,
analogous to how the PCA components can leak information about training data \citep{Chaudhuri:2012},
and that the filters also need to be sanitized before release.
The postprocessing approach makes the whole process simpler. 
In this case, after each data owner perturbs the data by herself, any subsequent 
postprocessing, whether it is the process of applying pretrained filters or
the process of training minimax filters, does not worsen
differential privacy guarantees \citep{Dwork:2014}, and therefore the postprocessing
approach is a safer choice when the data owners cannot trust the entity that collects
training data. 

\begin{figure}[tb]
\centering
\includegraphics[width=1\linewidth]{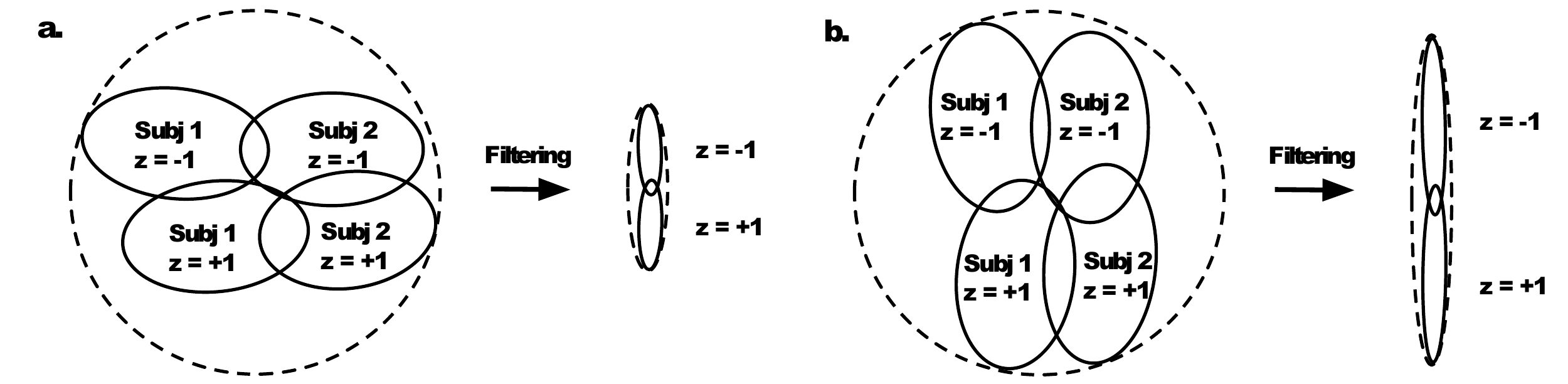}
\caption{Two example data distributions which have the same data diameter before
filtering but have different diameters after filtering.
a. an example where between-subject diameter (\ref{eq:Sb}) is large.
b. an example where within-subject diameter (\ref{eq:Sw}) is large. 
}
\label{fig:sensitivity}
\end{figure}

\section{Experiments}\label{sec:experiments}

In this section, the algorithms proposed in the paper are evaluated using three 
real-world data sets:
face data for gender/expression classification,
speech data for emotion classification, and motion data for activity classification.
Firstly, minimax filters are compared with non-minimax methods in terms of privacy
breach vs utility as measured by accuracy of private and target tasks classifiers 
on test data.
Secondly, noisy minimax filters are tested under various conditions using
the same data sets.

\subsection{Methods}
{\bf Filters}.
The following minimax and non-minimax filters are compared.
\begin{itemize} \setlength{\itemsep}{-4pt}
\item Rand: random subspace projection with $g(x;U)=U^Tx$, 
where $U$ is a random full rank $D\times d$ matrix.
\item PCA: principal component analysis with $g(x;U)=U^Tx$, where $U$ is the
 eigenvectors corresponding to $d$ largest
eigenvalues of $\mathrm{Cov}(x)$.
\item PPLS: private partial least squares, using Algorithm~1 from \cite{Enev:2012}.
\item DDD: discriminately decreasing discriminability (DDD) from
 \cite{Whitehill:2012} with a mask-type filter from the code\footnote{\url{http://mplab.ucsd.edu/\~jake}}.
\item Minimax 1: linear filter $g(x;U)=U^Tx$ where $U$ is computed from
Alg.~(\ref{alg:alternating}).
\item Minimax 2: nonlinear filter $g(x)$ from a two-layer sigmoid neural network
 with of hidden nodes of $20$ and $10$, computed from Alg.~\ref{alg:alternating}.
\end{itemize}
{\it Remarks}. DDD requires analytical solutions to eigenvalue problems which
are unavailable for multiclass problems, and is used only in the binary problem with 
the face database.
Also, DDD uses a mask-type filter in the codes, and the dimension $d$ is same as the image size.
The dimension $d$ is also irrelevant to nonlinear Minimax filter 2 since it does not
use linear dimensionality reduction.
The nonlinear filter is pretrained as a stacked denoising autoencoders
\citep{Vincent:2008:ICML} followed by supervised backpropagation with the target task. 
{\bf Classifier/loss}. 
For all experiments, binary or multinomial logistic regression is used a classifier
for both utility and privacy risks, where
the loss $l(h(g(x;u);v),y) $ is the negative log-likelihood with regularization:
\begin{equation} \label{eq:softmax}
l = -v(y)^Tg(x;u) + \log(\sum_{k=1}^K e^{v(k)^Tg(x;u)}) + \frac{\lambda}{2} \sum_{k=1}^K \|v(k)\|^2
\end{equation}
where $K$ is the number of classes.
The regularization coefficient was $\lambda=10^{-6}$ and the utility-privacy 
tradeoff coefficient was $\rho=10$. 
The main iteration in Alg.~\ref{alg:alternating} was stopped when the
progress was slow, which was between $T = 20 - 200$. 


\subsection{Data sets}

{\bf Gender/expression classification from face}:
The GENKI database \citep{Whitehill:2012} consists of face images with
varying poses and facial expressions. The original data set is used unchanged, 
which has $N=1740$ training images (50\% male and 50\% female; 
50\% smile and 50\% non-smile).
The test set has 100 images (50 males and 50 females; 50 smiling and
50 non-smiling) not overlapping with the training set.
The dimensionality of the original data is $D=256$, and the filters are tested with
$d=10, 20, 50, 100$. 
The data set has gender and expression labels but no subject label. Consequently, 
gender classification is used as the private task and 
expression classification is used as the target task.

\noindent
{\bf Emotion classification from speech}:
The ENTERFACE database \citep{Martin:2006} is an audiovisual emotion database of
43 speakers from 14 nations reading predefined English sentences in six induced emotions.
From the raw speech signals sampled in 48 KHz, MFCC coefficients are 
computed using 20 ms windows with 50\% overlap and 13 Mel-frequency bands.
The mean, max, min, and standard deviation of the MFCC coefficients over the duration 
of each sentence are computed, resulting in $N=427$ samples of $D=52$ dimensional feature
vectors from $S=43$ subjects.
Each subject's samples are randomly split to generate training (80\%) and 
test (20\%) sets. Average test accuracy over 10 such trials is reported.
Filters are tested with $d=10, 20, 30, 40$. 
The target task is the binary classification of `happy' and `non-happy'
emotions from speech, and the privacy task is the multiclass $(S=43)$ subject
classification.

\noindent
{\bf Activity classification from motion}:
The UCI Human Activity Recognition (HAR) data set \citep{Anguita:2012}
is a collection of motion sensor data on a smartphone by 30 subjects
performing six activities ({\it walking}, {\it walking upstairs}, 
{\it walking downstairs}, {\it sitting}, {\it standing}, {\it laying}).
Various time and frequency domain variables are extracted from the signal,
resulting in $N=10299$ samples of $D=561$ dimensional features from $30$
subjects which are used unchanged.
Out of 30 subjects, 15 subjects are chosen randomly.
For each domain, each subject's samples are randomly split to generate training (50\%) and 
test (50\%) sets. 
At each trial, the subjects and the training/test sets
are randomized, and the average test accuracy over 10 such trials is reported.
Filters with dimensions $d=10, 20, 50, 100$ are used.
The target task is the multiclass ($C=6$) classification of activity,
and the privacy task is the multiclass $(S=15)$ subject classification.

\subsection{Result 1: Minimax filters}\label{sec:result 1}

Before any filter is applied, the accuracy of the target tasks with raw data is 
0.90 (GENKI), 0.84 (ENTERFACE), and 0.97 (HAR). 
On the other hand, the accuracy of the private tasks with raw data is 
0.90 (GENKI), 0.62 (ENTERFACE), and 0.70 (HAR). 
The high accuracy of the private tasks (considering the chance level accuracy of 
0.5 (GENKI), 0.02 (ENTERFACE), and 0.067 (HAR)) demonstrates that an adversary can
accurately infer private variables such as gender and identity from raw data 
if no filter is used.
A simple defense against inference attack is to perform dimensionality reduction
on the original data, such as Rand and PCA projections. 
As the dimensionality $d$ decreases from the original value $D$ towards zero, 
one can expect both the private and the target accuracy to decrease toward
the chance level. 
This trend is indeed the case with both the non-private (Rand,PCA) and the private
(PPLS,DDD,Minimax) filters used in the paper. 
Therefore these filters are evaluated at several different values of
the dimensionality $d$ to make fine-grained comparisons of utility-privacy.

\begin{figure}[tb]
\centering
\includegraphics[width=1\linewidth]{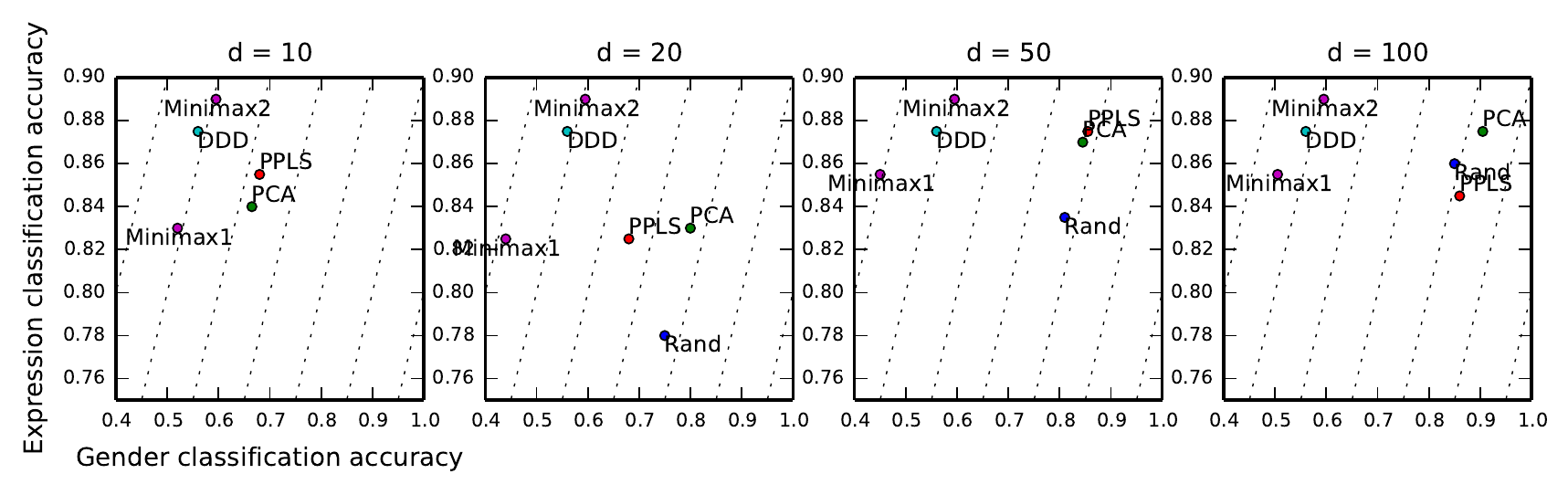}
\caption{GENKI: Expression classification vs and gender classification from faces. 
}
\label{fig:result1_genki}
\end{figure}
\begin{figure}[tb]
\centering
\includegraphics[width=1\linewidth]{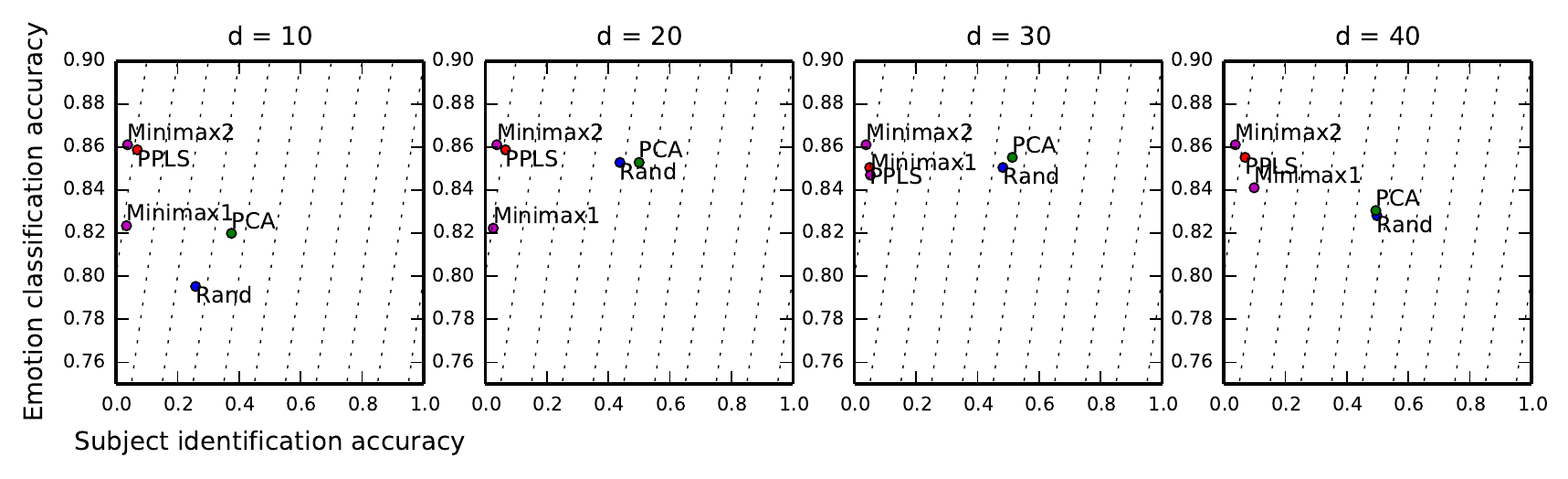}
\caption{ENTERFACE: Emotion classification vs and subject identification from speech.
}
\label{fig:result1_enterface}
\end{figure}

\begin{figure}[tb]
\centering
\includegraphics[width=1\linewidth]{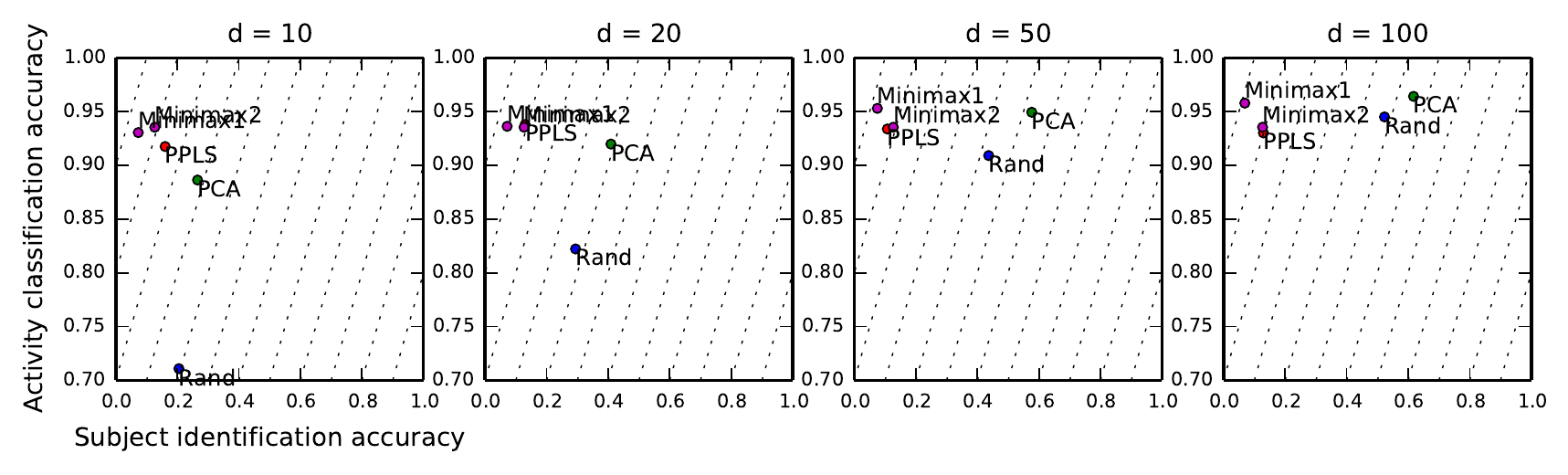}
\caption{HAR: Activity classification vs subject identification from motion.
}
\label{fig:result1_har}
\end{figure}

Fig.~\ref{fig:result1_genki} shows the test accuracy with GENKI. 
The dotted lines are level sets of utility-privacy tradeoff 
(i.e., target task accuracy - private task accuracy) shown for reference.
Minimax 2 achieves the best utility (i.e., most accurate expression classification) 
and Minimax 1 (linear) achieves the best privacy (i.e., least accurate 
gender classification).
For all dimensions $d$, Minimax 1 achieves the best utility-privacy compromise 
(i.e., closest to the top-left corner of the plot), with Minimax 2 and DDD
performing similarly. 
In terms of private task accuracy, Minimax 1 achieves 
almost the chance level accuracy (0.5), which implies a strong privacy preservation.
DDD comes close to Minimax 1, while another private method PPLS is not very
successful in preventing the inference of the private variable.
As expected, non-private methods Rand and PCA also do not reduce the privacy task accuracy.
As dimension $d$ increases from 10 to 100, the accuracy of both the target and 
the private tasks increase (toward the top-right corner of the plot) for 
PPLS, PCA and Rand, but the value of utility-privacy tradeoff
(i.e., target task accuracy - private task accuracy) remains relatively 
similar even though $d$ changes. Note that $d$ is irrelevant to Minimax 2 and DDD.

Fig.~\ref{fig:result1_enterface} shows the test accuracy of ENTERFACE.
Minimax 2 achieves the best utility (i.e., most accurate emotion classification) 
and the best privacy (i.e., least accurate subject classification) at the same time.
PPLS performs well in this task; its private and target task accuracy 
is close to those of Minimax 2. 
The private task accuracy of Minimax 2 is near the chance level ($1/S=0.02$)
compared to $0.4 - 0.5$ of non-private methods, suggesting that seemingly 
harmless statistics (mean, max, min, s.d. of MFCC)
are quite susceptible to identification attacks if no privacy mechanism is used.
Similar to GENKI, the accuracy of both the target and the private tasks
increases with the dimension $d$ for PCA and Rand, and the value of utility-privacy
tradeoff remains similar regardless of $d$.

Fig.~\ref{fig:result1_har} shows the test accuracy of HAR.
Minimax 1 achieves the best utility (i.e., most accurate activity recognition)
and the best privacy (i.e., least accurate subject classification),
while Minimax 2 and PPLS performs similarly well.
The private task accuracy of Minimax 1 is lower than others close to the chance level ($1/S=0.067$).
The figure also shows that motion data are susceptible ($0.2 - 0.7$) 
to identification attacks when no privacy mechanism is used.
For all dimensions $d$, Minimax 1 achieves the best compromise of all methods 
similar to previous experiments. Also the accuracy of both the target and the private
tasks roughly increases with $d$ for PCA and Rand, 
but the value of utility-privacy tradeoff remains similar.

\subsection{Result 2: Noisy minimax filters}\label{sec:result 2}

The same data sets from the previous section are used to demonstrate
the effect of noisy mechanism on minimax filters.
Four types of noisy filters are compared:
PCA-pre, PCA-post, Minimax-pre, and Minimax-post.
PCA is chosen as a non-minimax reference filter which preserves the original 
signal the best in the least mean-squared-error sense.
PCA-pre/post means that PCA is applied before/after the perturbation similarly to
Minimax-pre/post from Fig.~\ref{fig:preprocessing}.
For Minimax-pre/post, a linear filter of the same dimension $d$
as PCA-pre/post is used. 
Tests are performed for the same ranges of dimension $d$ as in Sec 6.3. 
The results for $d=20$ with all three data sets are summarized in Fig.~\ref{fig:result1_DP_all}. 
Results for different dimensions show similar trends and are summarized in Fig.~\ref{fig:result1_DP_all_full}. 
Optimization of (\ref{eq:joint goal 1}) is done similarly to the previous section.
All tests are repeated 10 times for different noise samples of 
(\ref{eq:sensitivity method}), for each of 10 random training/test splits.

\begin{figure}[tb]
\centering
\includegraphics[width=1\linewidth]{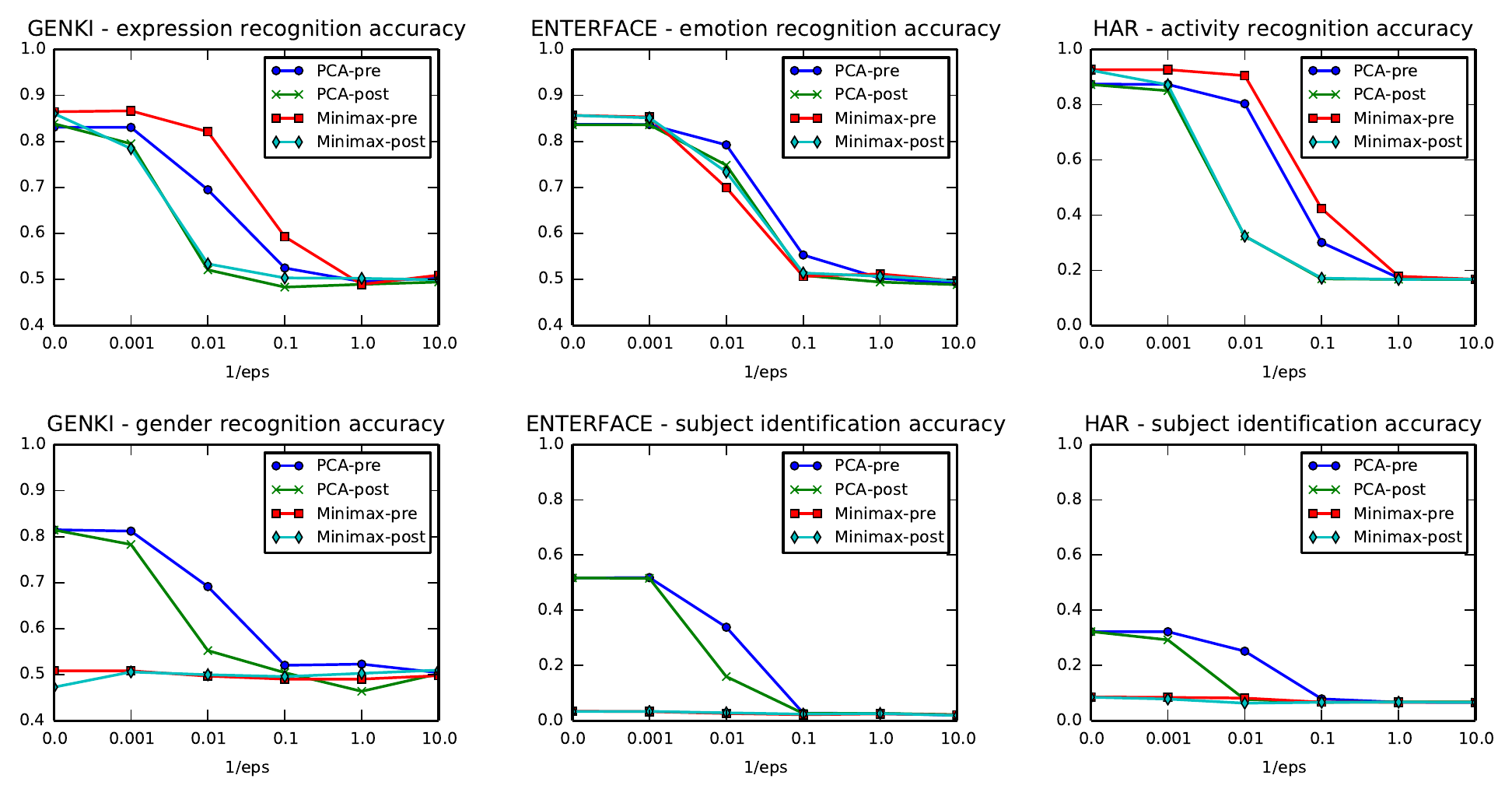}
\caption{Impact of four noisy filters
(PCA-pre/post and Minimax-pre/post) on the accuracy of target and private tasks
for three data sets (GENKI, ENTERFACE, HAR),
over the range of $\epsilon^{-1} = \{0, 10^{-3},10^{-2},10^{-1},10^0, 10^1\}$. 
Top row is the target task accuracy (higher the better) and bottom row
is the private task accuracy (lower the better.) 
Minimax-pre/post can limit the accuracy of inference attack (bottom row) 
to almost chance levels regardless of the value of $\epsilon$, while
PCA-pre/post requires a significantly high $\epsilon$ to prevent inference
attacks which also destroy the utility.}
\label{fig:result1_DP_all}
\end{figure}

Fig.~\ref{fig:result1_DP_all} shows the following results.
Firstly, within each plot, increasing the privacy level from left ($\epsilon^{-1}\mathtt{=}0$) 
to right ($\epsilon^{-1}\mathtt{=}10$) lowers the accuracy of both target and private tasks 
for all filter types and data sets, which is intuitively correct.
Secondly, target task accuracy (top row) shows that the four filters are 
equally accurate with no noise ($\epsilon^{-1}\mathtt{=}0$), with
Minimax-pre/post slightly more accurate than PCA-pre/post. This observation
is consistent with the results in Sec.~\ref{sec:result 1}.
In GENKI and HAR, preprocessing is better than postprocessing for both
PCA and Minimax, and Minimax-pre performs the best.
In ENTERFACE, preprocessing and postprocessing approaches perform similarly,
and all four filters is perform similarly on the target task. 
This result may be ascribed to the discussion of different data distribution 
in Sec.~\ref{sec:pre vs post}.
Thirdly, and most importantly, private task accuracy (bottom row) is quite different
between Minimax-pre/post and non-minimax PCA-pre/post.
For both Minimax-pre and Minimax-post, the private task accuracy is almost
as low as the chance accuracy of each data set (0.5, 0.03, 0.07) regardless
of the noise level $\epsilon$.
This demonstrates that minimax filter can prevent
inference attacks with little help of noise.
In contrast, the non-minimax filters (PCA-pre/post) allow an adversary to infer
private variables quite accurately (0.8, 0.5, 0.3) when no noise is used.
Preventing such attacks for non-minimax filters requires a significant amount 
of additive noise 
(e.g., $\epsilon^{-1}{\geq}0.1$) which destroys the utility of data.
These results show that differentially privacy is indeed different from 
privacy against inference attacks and the combination of two methods is 
beneficial.

\section{Conclusion}\label{sec:conclusion}

This work presents a new learning-based mechanism for preventing
inference attacks on continuous and high-dimensional data.
In this mechanism, a filter transforms continuous and 
high-dimensional raw features to dimensionality-reduced representations
of data. After filtering, information on target tasks remains but 
information on identifying or sensitive attributes is removed 
which makes it difficult for an adversary to accurately infer such 
attributes from the released filtered output.
Minimax filters are designed to achieve the optimal utility-privacy tradeoff
in terms of expected risks. The paper proves that a filter learned from
empirical risks is not far from an ideal filter that is learned from expected risks
as the number of samples increases. 
This property and its dependency on the task make this mechanism
quite different from previous mechanisms, including syntactic anonymization 
and differential privacy.
Algorithms for finding minimax filters are presented and evaluated 
on real-world data sets to show its practical usages.
Experiments show that publicly available multisubject data sets are
surprisingly susceptible to subject identification attacks, 
and that even simple linear minimax filters can reduce the privacy risks
close to chance level without sacrificing target task accuracy by much.

This work also presents preprocessing and postprocessing approaches to combine
minimax privacy and differential privacy.
While differential privacy has become a popular criterion of privacy loss, 
it is not without limitations, in particular against inference attacks
as empirically demonstrated in the paper.
This leaves room for development of new mechanisms such as the noisy
minimax filter presented in the paper, which aims to achieve
high on-average utility and protection against inference attacks,
and a formal privacy guarantee to a degree. 
The results from experiments encourage further research on potential benefits of
combining different notions and mechanisms of privacy, which is left as future work.

\if0
Future work will focus on refining the proposed methods.
In particular, the postprocessing approach can potentially use the knowledge of
noise distribution to improve learning, analogous to \cite{Williams:2010}
where probabilistic estimators are learned from the noisy output of a mechanism.
\fi


\begin{figure}[tb]
\centering
\includegraphics[width=1\linewidth]{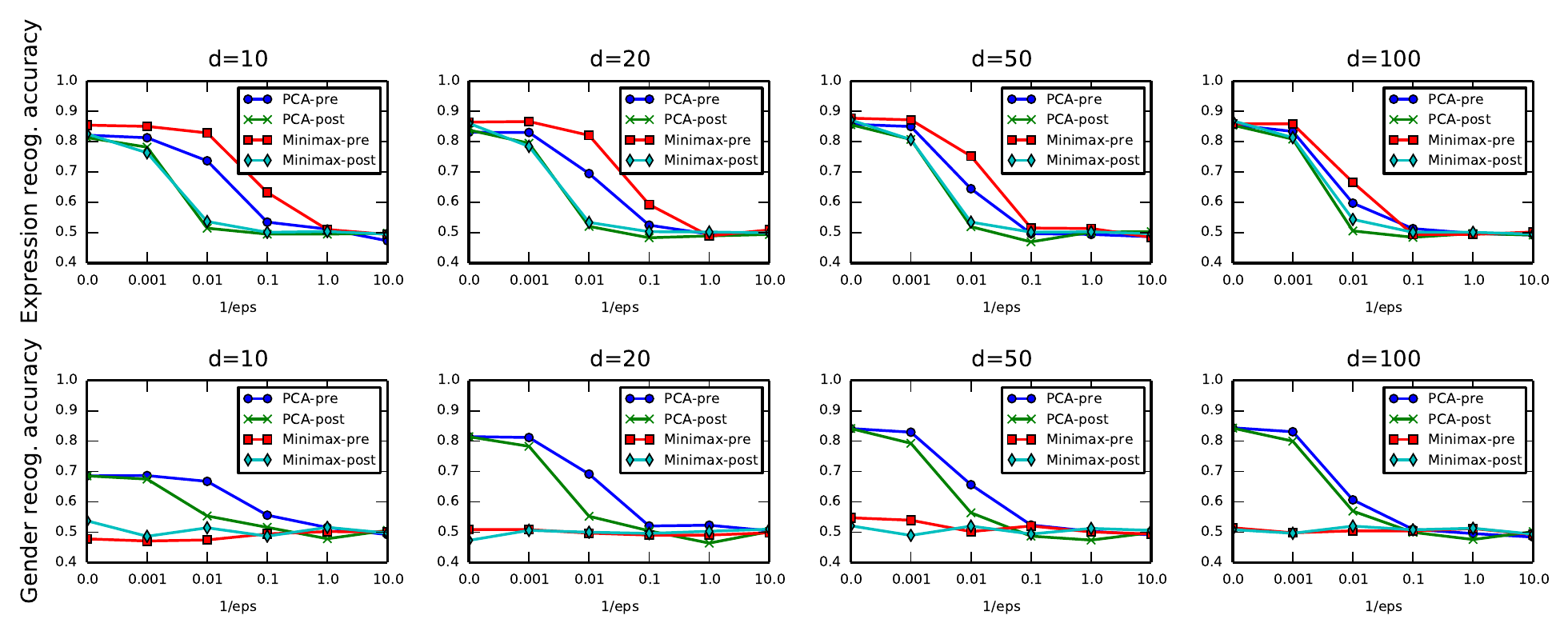}
\includegraphics[width=1\linewidth]{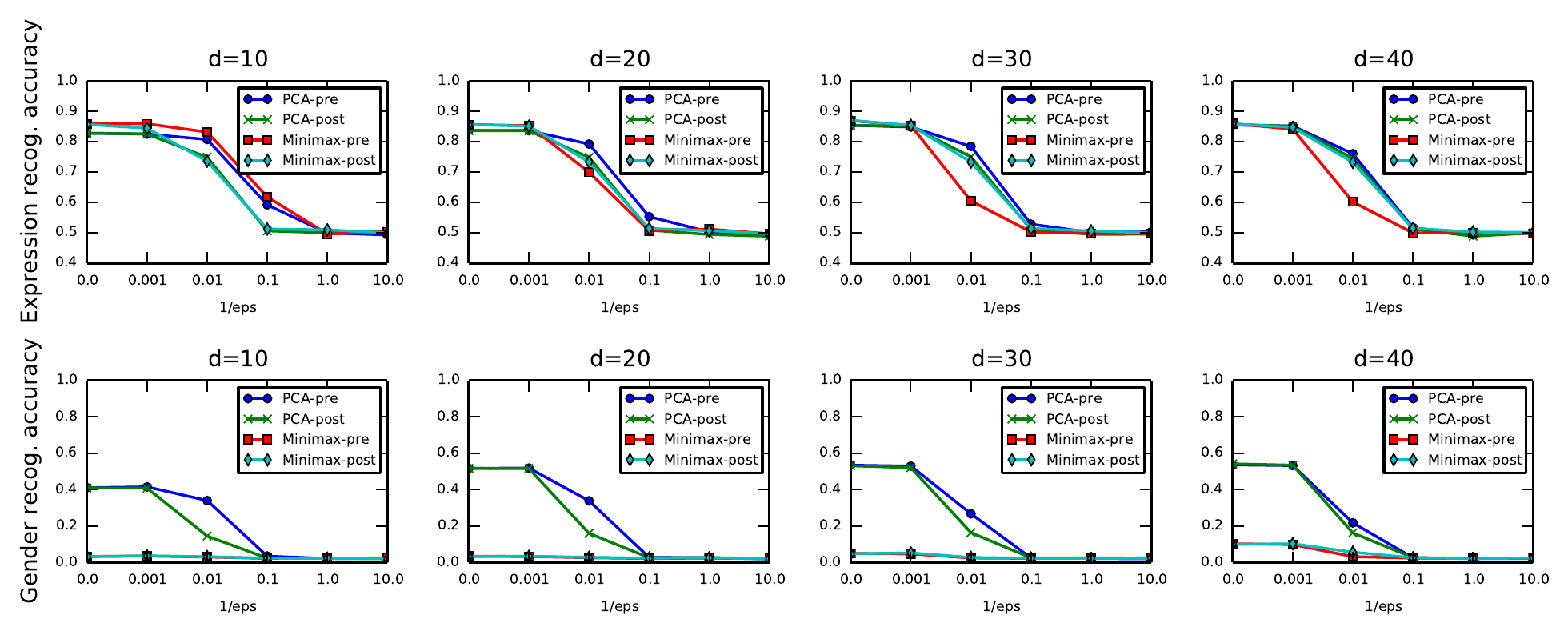}
\includegraphics[width=1\linewidth]{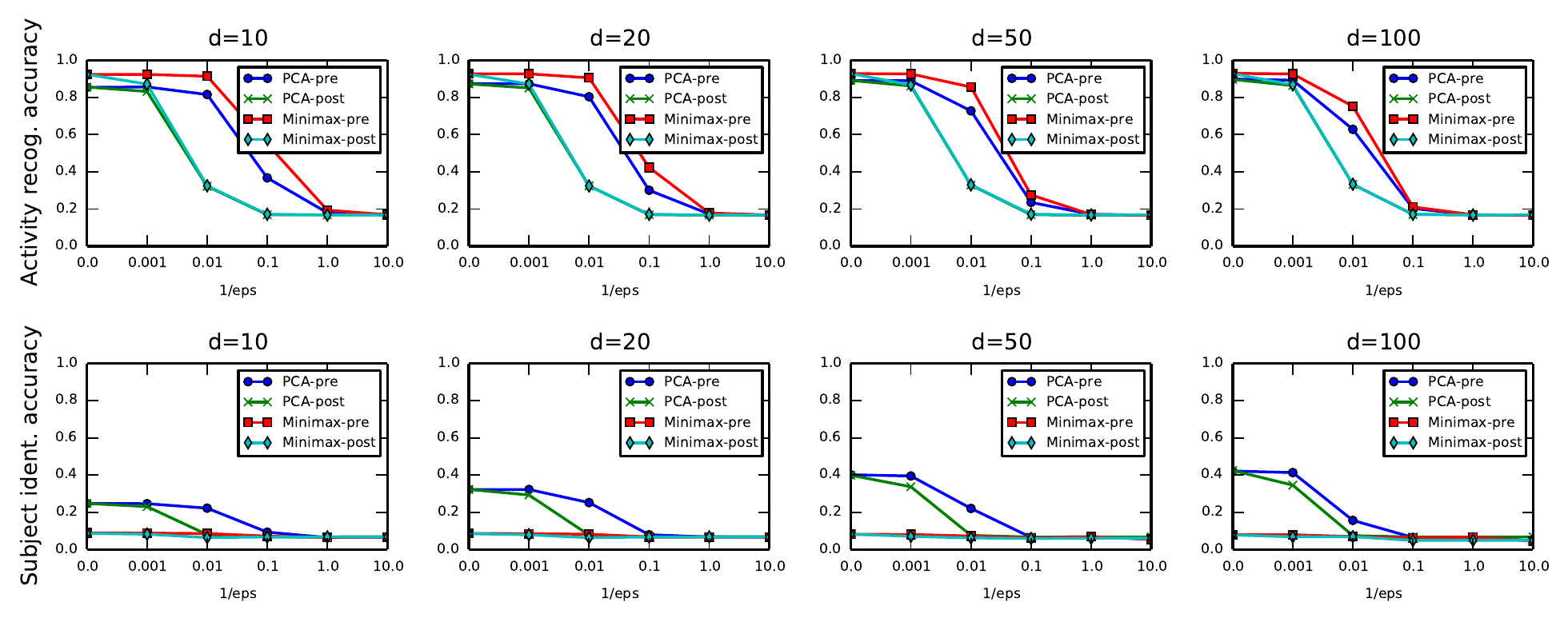}
\caption{The full result of the performance noisy filters
(PCA-pre/post and Minimax-pre/post) on the accuracy of target and private tasks
on three data sets: GENKI (1st \& 2nd row), ENTERFACE (3rd \& 4th row), HAR (5th \& 6th row).
}
\label{fig:result1_DP_all_full}
\end{figure}

\clearpage





%


\bibliography{jmlr15_jh}

\end{document}